\documentclass{article}

\usepackage{microtype}
\usepackage{graphicx}
\usepackage{subcaption}
\usepackage{booktabs}
\usepackage{float}
\usepackage{hyperref}
\usepackage{amsmath}
\usepackage{amssymb}
\usepackage{mathtools}
\usepackage{amsthm}
\usepackage{multirow}
\usepackage[capitalize,noabbrev]{cleveref}
\usepackage[textsize=tiny]{todonotes}

\usepackage[accepted]{icml2026}

\theoremstyle{plain}
\newtheorem{theorem}{Theorem}[section]

\theoremstyle{definition}

\theoremstyle{remark}

\icmltitlerunning{Training-Free Bayesian Filtering with Generative Emulators}

\begin{document}

\twocolumn[
  
    \icmltitle{Training-Free Bayesian Filtering with Generative Emulators}

    \begin{icmlauthorlist}
        \icmlauthor{Thomas Savary}{ulg}
        \icmlauthor{François Rozet}{ulg}
        \icmlauthor{Gilles Louppe}{ulg}
    \end{icmlauthorlist}

    \icmlaffiliation{ulg}{SAIL, Montefiore institute, University of Liège, Belgium}

    \icmlcorrespondingauthor{Thomas Savary}{tsavary@uliege.be}

    \icmlkeywords{Generative Models, Generative Emulators, Filtering, Data Assimilation, Diffusion, Stochastic Interpolants, ICML}

  \vskip 0.3in
]

\printAffiliationsAndNotice{}

\begin{abstract}
   Bayesian filtering is a well-known problem that aims to estimate plausible states of a dynamical system from observations. Among existing approaches to solve this problem, particle filters are theoretically exact for non-linear dynamics and observations, but suffer from poor scalability in high dimensions. In this work, we show that diffusion-based emulators of dynamical systems can be used to implement, without additional training, an optimal variant of particle filters that has remained largely unexplored due to implementation challenges with classical numerical solvers. Experiments on nonlinear chaotic systems, including atmospheric dynamics, demonstrate that the proposed approach successfully scales particle filtering to high-dimensional settings.
\end{abstract}

\section{Introduction}

    Numerical simulation of dynamical systems is a central tool in science, enabling the exploration and prediction of complex phenomena that are not accessible to direct experimentation. Traditionally, it relies on modeling the dynamics using partial differential equations, which are then solved with numerical methods \cite{Lorenz63, NavierStokesSimulation, Hairer}. This approach requires significant modeling and numerical effort, and becomes computationally expensive for large dynamical systems. 
    
    Recently, deep neural networks  have emerged as a compelling alternative, achieving competitive accuracy at substantially lower computational cost \cite{cnn_simulation, GraphCast}. In particular, generative models \cite{diffusion, FlowMatching, Interpolants} attract growing interest due to their ability to capture high-dimensional, multimodal distributions, making them promising candidates for efficient simulation of dynamical systems \cite{probabilistic_forecasting_interpolants, lola}.

    A frequently overlooked aspect of these methods is their dependence on an initial condition to start the simulation. The latter is particularly important for chaotic systems, where small deviations in the initial state grow exponentially over time \cite{intro_to_chaos}. Since, in most real-world settings, it is difficult to accurately estimate the state of the system at a given time, a common practice, known as data assimilation, is to use a set of observations to infer the most probable states \cite{data_assimilation}. Various algorithms have been proposed to tackle this problem \cite{lorenc_data_assimilation, EnKF_Evensen}, and this paper contributes to ongoing efforts to adapt and design data assimilation algorithms with generative models \cite{SDA, DiffDA}.

    \subsection{Problem statement}
        We consider a discrete-time Markovian dynamical system with unknown state $x^{k}$ at time step $k$. The goal, known as Bayesian filtering \cite{BayesianFiltering}, is to estimate $x^{k}$ from past and current observations $y^{1:k} = (y^{1}, y^{2}, \dots, y^{k})$, that is, to approximate the posterior distribution $p(x^{k} \mid y^{1:k})$. For example, in weather forecasting, $x^{k}$ represents the state of the atmosphere at a given time, while observations $y^{1:k}$ come from ground weather stations and satellites \cite{Arome3DEnVAR}. We further assume access to a generative emulator of the dynamics, in the form of a diffusion model (see Section \ref{Subsection:diffusion}), which allows generating probable future states $x^{k+1}$ from a given current state $x^{k}$, that is, drawing samples from $p(x^{k+1} \mid x^{k})$.

    \subsection{Contributions}
        In this work, we propose a simple but effective method that adapts diffusion emulators to perform Bayesian filtering without additional training. The method is mathematically grounded and establishes an elegant connection between generative models and particle filters \cite{PF}, in particular the fully adapted particle filter \cite{FA_APF}. We show that, contrary to common belief \cite{HighDimensional_PF}, particle filters can be applied in high-dimensional problems when combined with generative models, even with relatively few particles. To illustrate this, we apply our method on GenCast, a global diffusion-based emulator of the atmosphere \cite{GenCast}. The code is available at \href{https://github.com/ThomasSavary08/FA-APF}{https://github.com/ThomasSavary08/FA-APF}.
    
\section{Preliminaries}

    \subsection{Data assimilation and Bayesian filtering}

        As explained earlier, the goal of data assimilation algorithms is to estimate the state $x^{k}$ of a dynamical system at time $k$ from a set $\mathcal{I}$ of observations $\{y^{i}\}_{i \in \mathcal{I}}$, a model of the system dynamics, and a model for the observations. Formally, following the notations of \citet{data_assimilation}, we define the dynamics as 
        \begin{equation} \label{eq:dynamic_model}
            x^{k+1} = \mathcal{M}(x^{k}, \lambda) + \eta^{k+1} \sim p(x^{k+1} \mid x^{k}),
        \end{equation}
        where $\mathcal{M}$ is a transition model, $\lambda$ the parameters of the model, and $\eta^{k+1}$ a stochastic additive term intended to represent the error between the model prediction and the true unknown process initialized from the perfect initial condition. This term accounts for the cumulative effect of errors in the parameters $\lambda$, errors in the transition model $\mathcal{M}$, and the effect of unresolved scales \cite{data_assimilation}. In Section \ref{section:Method}, we assume access to a diffusion model that directly samples from $p(x^{k+1} \mid x^{k})$.
        
        For the observations, we assume that the measurement process can be modeled by an operator $\mathcal{H}: \mathbb{R}^{n} \longrightarrow \mathbb{R}^{d}$ mapping the state space to the observation space, leading to the following formulation
        \begin{equation} \label{eq:observation_model}
            y^{k} = \mathcal{H}(x^{k}) + \varepsilon^{k}.
        \end{equation}
        Similarly to the model error, the additive stochastic term $\varepsilon^{k}$ accounts for the instrumental error of observing devices (such as satellites), and deficiencies in the formulation of the observation operator itself \cite{data_assimilation}. As commonly assumed in other data assimilation works \cite{SDA, EnSF, Appa}, we model this additive term as a zero-mean Gaussian with covariance $\Sigma_{y}$, leading to a Gaussian distribution for the observations
        \begin{equation} \label{eq:observation_distribution}
            y^{k} \sim \mathcal{N}(\mathcal{H}(x^{k}), \Sigma_{y}).
        \end{equation}

        Depending on the set of observations used to estimate the system state, we commonly distinguish two main problems in data assimilation. When past, current, and future observations are considered, the problem is referred to as  Bayesian smoothing \cite{BayesianFiltering}. Solutions to this problem are primarily used to construct reanalysis datasets such as ERA5 \cite{ERA5}. When only past and current observations are used, the problem reduces to Bayesian filtering, as introduced in the problem statement. Solutions to this problem are mainly used operationally to determine initial conditions for simulations \cite{Arome3DEnVAR}.

    \subsection{Particle filters} \label{subsection:PF}

        Particle filters are a family of data assimilation algorithms designed to estimate the filtering distribution $p(x^{k} \mid y^{1:k})$ by a discrete probability measure $\mu^{k} = \sum_{i=1}^{N} w^{k}_{i} \delta_{x^{k}_{i}}$ that converges weakly
        \begin{equation} \label{eq:weak_convergence_pf}
            \sum_{i = 1}^{N} w^{k}_{i}g(x^{k}_{i}) \underset{N \to +\infty}{\longrightarrow} \int g(x^{k})p(x^{k} \mid y^{1:k}) \mathrm{d}x^{k},
        \end{equation}
        where $x^{k}_{i}$ are the particles at time $k$, $w^{k}_{i}$ the associated weights, $y^{1:k}$ the observations and $g$ any continuous and bounded function \cite{PF}.

        Particle filters are derived from the Bayesian filtering recursion: at each time step $k$, the next filtering distribution is obtained through a prediction and update step
        \begin{align} \label{eq:pf_equations}
            &p(x^{k+1} \mid y^{1:k}) = \int p(x^{k+1} \mid x^{k}) p(x^{k} \mid y^{1:k}) \mathrm{d}x^{k}, \\
            &p(x^{k+1} \mid y^{1:k+1}) \propto p(y^{k+1} \mid x^{k+1}) p(x^{k+1} \mid y^{1:k}).
        \end{align}
        In practice, the prediction step is done by sampling the next particles from a proposal distribution $q(x^{k+1} \mid x^{k}, y^{k+1})$ that represents any transition distribution conditioned on the current state $x^{k}$ and, optionally, the next observation $y^{k+1}$. Weights are then updated to correct the mismatch between the predicted and true posterior distribution 
        \begin{equation} \label{eq:unnormalized_weights}
            \hat{w}^{k+1}_{i} = \frac{p(y^{k+1} \mid x^{k+1}_{i})p(x^{k+1}_{i} \mid x^{k}_{i})}{q(x^{k+1}_{i} \mid x^{k}_{i}, y^{k+1})} \times w_{i}^{k},
        \end{equation}
        and subsequently normalized to obtain a valid probability measure
        \begin{equation} \label{eq:normalized_weights}
            w^{k+1}_{i} = \frac{\hat{w}_{i}^{k}}{\sum_{j=1}^{N} \hat{w}_{j}^{k}}.
        \end{equation}

        Unlike classical approaches such as ensemble Kalman filters \cite{EnKF_Evensen} or variational methods \cite{4D_VAR}, particle filters offer the theoretical advantage of handling non-linear transition and observation models, which arise in most dynamical systems of interest. However, in high-dimensional systems, particle filters suffer from the curse of dimensionality \cite{pf_degeneracy}. This phenomenon, known as degeneracy, corresponds to the
        situation where only a small subset of particles have non-negligible weights. It is due to the dimension of the observation space: the higher this dimension, the more peaked the likelihood is, and the more unlikely it is for the majority of particles to end up close to the observation \cite{pf_degeneracy}. 
        
        To alleviate this issue, particle filtering algorithms commonly introduce a resampling step, which preserves particle diversity and resets the weights. Also, it has been shown in the literature that using the optimal proposal $q(x^{k+1} \mid x^{k}, y^{k+1}) = p(x^{k+1} \mid x^{k}, y^{k+1}) $ during the sampling step minimizes the variance of the weights and, consequently, reduces degeneracy \cite{OptimalProposal}. However, the use of the optimal proposal is rarely feasible in practice, as it is difficult to implement for standard simulators.

    \subsection{Diffusion models for probabilistic forecasting} \label{Subsection:diffusion}

        Diffusion models \cite{DDPM, diffusion}, are a class of generative models to sample plausible data from a distribution $p(x)$ of interest. Formally, adapting the formulation of \citet{diffusion}, samples $x \in \mathbb{R}^{n}$ from $p(x)$ are progressively perturbed through a diffusion process expressed as a stochastic differential equation (SDE)
        \begin{equation} \label{eq:forward_diffusion_process}
            \mathrm{d}x_{t} = f_{t}x_{t} \mathrm{d}t + g_{t} \mathrm{d}w_{t},
        \end{equation}
        where $f_{t} \in \mathbb{R}$ is the drift coefficient, $g_{t} \in \mathbb{R}_{+}$ the diffusion coefficient, $w_{t} \in \mathbb{R}^{n}$ a standard Wiener process, and $x_{t} \in \mathbb{R}^{n}$ the perturbed sample at time $t \in [0,1]$. Because the SDE is linear, the perturbation kernel from $x$ to $x_{t}$ is Gaussian and takes the form
        \begin{equation} \label{eq:transition_kernel_diffusion}
            p_{t}(x_{t} \mid x) = \mathcal{N}(x_{t} \mid \alpha_{t}x, \Sigma_{t}),
        \end{equation}
        where $\alpha_{t}$ and $\Sigma_{t} = \sigma_{t}^{2}I$ are derived from $f_{t}$ and $g_{t}$ \cite{SDE}. Crucially, the forward SDE \eqref{eq:forward_diffusion_process} has an associated family of reverse SDEs \cite{SDE}
        \begin{equation} \label{eq:reverse_diffusion_equation}
            \mathrm{d}x_{t} = \left [ f_{t}x_{t} - \frac{1 + \eta^{2}}{2}g_{t}^{2} \nabla_{\!x_{t}} \log p_{t}(x_{t}) \right] \mathrm{d}t + \eta g_{t} \mathrm{d}w_{t},
        \end{equation}
        where $\eta$ is a parameter controlling stochasticity. In simpler words, we can draw noise samples from $p(x_{1}) \approx \mathcal{N}(0, \Sigma_{1})$ and gradually remove the noise to obtain samples from $p(x)$ by simulating Equation~\eqref{eq:reverse_diffusion_equation} from $t=1$ to $t=0$ using an appropriate discretization scheme \cite{Exponential_solver, DDIM}.

        In practice, the score function $\nabla_{\!x_{t}} \log p_{t} (x_{t})$ in Equation~\eqref{eq:reverse_diffusion_equation} is unknown, but can be approximated by a neural network $d_{\theta}$ called denoiser and trained with the following loss function
        \begin{equation} \label{eq:diffusion_training}
            \mathcal{L}(\theta) = \mathbb{E}_{x,t,x_{t}} \left [ \lambda_{t} \lVert x - d_{\theta}(t, x_{t}) \rVert_{2}^{2} \right].
        \end{equation}
        In fact, the optimal denoiser is the mean $\mathbb{E}[x \mid x_{t}]$ of $p(x \mid x_{t})$, and is linked to score function through the first order Tweedie's formula (see Appendix \ref{appendix:Tweedie})
        \begin{equation} \label{eq:first_order_Tweedie}
            \nabla_{\!x_{t}} \log p_{t} (x_{t}) = \Sigma_{t}^{-1}(\alpha_{t} \mathbb{E}[x \mid x_{t}] - x_{t}).
        \end{equation}
        
        Given a dataset $\{(x^{k}_{i}, x^{k+1}_{i})\}_{i \in \mathcal{I}}$ of successive states of a dynamical system, we can adapt this paradigm to construct a generative emulator of the dynamic. To do so, we reformulate the training objective given in Equation~\eqref{eq:diffusion_training} as
        \begin{equation} \label{eq:training_conditional_diffusion}
            \mathbb{E}_{(x^{k},x^{k+1}), t, x^{k+1}_{t}} \left [ \lambda_{t} \lVert x^{k+1} - d_{\theta}(t, x^{k}, x^{k+1}_{t}) \rVert_{2}^{2} \right].
        \end{equation}
        In this case, the optimal denoiser is the conditional mean $\mathbb{E}[x^{k+1} \mid x^{k+1}_{t}, x^{k}]$ and is linked to the conditional score function $\nabla_{\!x^{k+1}_{t}} \log p_{t} (x^{k+1}_{t} \mid x^{k})$ through Equation~\eqref{eq:first_order_Tweedie}. Using this score in Equation~ \eqref{eq:reverse_diffusion_equation} yields samples from $p(x^{k+1} \mid x^{k})$, thereby enabling autoregressive emulation of the system dynamics. In the remainder of this work, we assume access to a diffusion-based emulator for a given dynamical system.

\section{Method} \label{section:Method}
    As outlined in the introduction, the objective is to estimate the Bayesian filtering distribution $p(x^{k} \mid y^{1:k})$ using a diffusion-based model trained for sampling the transition distribution $p(x^{k+1} \mid x^{k})$ of a given dynamical system. We show that such generative emulators can be directly used, without additional training, to implement the fully adapted auxiliary particle filter \cite{FA_APF}.
    \subsection{The Fully-Adapted Auxiliary Particle Filter}

        The fully adapted auxiliary particle filter (FA-APF) is a particle filtering algorithm designed to approximate the Bayesian filtering distribution \cite{FA_APF}. It is summarized in Algorithm~\ref{algo:FA-APF}, where $p(x^{0})$ denotes the prior distribution of the initial state, $N$ the number of particles, and $K$ the total number of time steps. The parameters $\alpha$, $\mathrm{N}_{\text{thr}}^{\text{min}}$, $\mathrm{N}_{\text{thr}}^{\text{max}}$ are introduced below.

        \begin{algorithm}    
        \caption{Fully-Adapted Auxiliary Particle Filter}
        \begin{algorithmic}[1]
            \STATE {\bfseries Inputs:} $p(x^{0})$, $N$, $N_{\text{thr}}^{\text{min}, \text{max}}$, $\alpha$, $K$
            \STATE $x^{0}_{i} \sim p(x^{0})$
            \STATE $w^{0}_{i} \gets 1/N$
            \FOR{$k=0$ {\bfseries to} $K-1$}
                \STATE $\mu^{k+1}_{i} \gets \mathbb{E}[x^{k+1} \mid x^{k}_{i}]$
                \WHILE{$N_{\text{eff}}$ not in $[N_{\text{thr}}^{\text{min}}, N_{\text{thr}}^{\text{max}}]$}
                    \STATE update/initialize $\alpha$
                    \STATE $\hat{w}^{k+1}_{i} \gets \left[ p(y^{k+1} \mid \mu^{k+1}_{i})\right]^{\alpha}$
                    \STATE $w^{k+1}_{i} \gets \hat{w}^{k+1}_{i} / \sum_{j=1}^{N} \hat{w}^{k+1}_{j}$
                    \STATE $N_{\text{eff}} \gets 1 / \sum_{i=1}^{N}(w^{k+1}_{i})^{2}$
                \ENDWHILE
                \STATE $a^{k+1}_{i} \sim \text{Cat}(\{w^{k+1}_{i}\}_{1 \leq i \leq N})$
                \vspace{0.2em}
                \STATE $x^{k+1}_{i} \sim p(x^{k+1} \mid x^{k}_{a^{k+1}_{i}}, y^{k+1})$
            \ENDFOR
            \STATE {\bfseries Return} $\mu^{k}_{x} = \frac{1}{N}\sum_{i = 1}^{N}\delta_{x^{k}_{i}}$ for all $k \in [1,K]$
        \end{algorithmic}
        \label{algo:FA-APF}
        \end{algorithm}

        Unlike most particle filters used in practice, FA-APF first computes the particle weights (lines 5–10 of Algorithm~\ref{algo:FA-APF}), and then propagates the particles to the next time step using these weights together with the optimal proposal $q(x^{k+1} \mid x^{k}, y^{k+1}) = p(x^{k+1} \mid x^{k}, y^{k+1})$ (lines 12-13). This choice minimizes the variance of the weights \cite{OptimalProposal} and therefore reduces particle degeneracy.

        To further control degeneracy, following a strategy commonly used in data assimilation methods \cite{LETKF}, we introduce an inflation coefficient $\alpha$ and a control interval $[\mathrm{N}_{\text{thr}}^{\text{min}}, \mathrm{N}_{\text{thr}}^{\text{max}}]$ on the effective sample size, defined as the number of particles with non-negligible weights among the $N$ particles. While this introduces a bias in the approximation of the filtering distribution, it ensures that the variance of particles is non-zero at each time step $k$.

    \subsection{Sampling from the optimal proposal} \label{subsection: Optimal_proposal}
    To apply the FA-APF, we must sample from the optimal proposal distribution $p(x^{k+1} \mid x^{k}, y^{k+1})$, which is generally infeasible for standard simulators. However, as shown by \citet{FlowDAS}, \citet{Filtering_GenCast} and \citet{DAISI}, this distribution can be accessed using diffusion emulators. 
    
    The key idea is to incorporate the observation $y^{k+1}$ into the score when solving the reverse diffusion Equation~\eqref{eq:reverse_diffusion_equation}, that is, using the score $\nabla_{\!x^{k+1}_{t}} \log p_{t} (x^{k+1}_{t} \mid x^{k}, y^{k+1})$ of the posterior. Thanks to Bayes' rule, this score can be decomposed as
    \begin{equation} \label{eq:score_bayes_diffusion}
        s^{x,y}_{t}(x_{t}^{k+1}, x^{k}, y^{k+1}) = s^{x}_{t}(x_{t}^{k+1}, x^{k}) + s^{y}_{t}(x_{t}^{k+1}, x^{k}, y^{k+1}),
    \end{equation}
    where $s^{x,y}_{t}$ denotes the score of the posterior, $s^{x}_{t}$ the score of the prior, and $s^{y}_{t}$ the score of the likelihood. Since the score of the prior is already available from the trained denoiser of the emulator through Equation~\eqref{eq:first_order_Tweedie}, the only unknown quantity that remains to be computed is the score $s^{y}_{t}(x_{t}^{k+1}, x^{k}, y^{k+1})$ of the likelihood.
    
    To do so, we use moment matching posterior sampling (MMPS, \citealp{MMPS}), a method that approximates the likelihood by
    \begin{align} \label{eq:likelihood_approximation_mmps}
        &p(y^{k+1} \mid x_{t}^{k+1}, x^{k}) \\
        &= \int p(y^{k+1} \mid x^{k+1})p(x^{k+1} \mid x_{t}^{k+1}, x^{k}) \mathrm{d}x^{k+1} \\        
        &\approx \int p(y^{k+1} \mid x^{k+1}) q(x^{k+1} \mid x_{t}^{k+1}, x^{k}) \mathrm{d}x^{k+1} \\
        & = \mathcal{N}(y^{k+1} \mid \mathcal{H}(\mathrm{m}), \Sigma_{y} + \mathrm{H}\mathrm{V}\mathrm{H}^{\top}),
    \end{align}
    where $ q(x^{k+1} \mid x_{t}^{k+1}, x^{k})$ is the density of Gaussian random variable with mean $\mathrm{m} = \mathbb{E}[x^{k+1} \mid x_{t}^{k+1}, x^{k}]$ and covariance $\mathrm{V} = \mathbb{V}[x^{k+1} \mid x_{t}^{k+1}, x^{k}]$, and $\mathrm{H}$ the Jacobian matrix of the observation operator. Assuming that the derivative of $\mathbb{V}[x^{k+1} \mid x^{k+1}_{t}, x^{k}]$ with respect to $x^{k+1}_{t}$ is negligible, we can then estimate the score of the likelihood as
    \begin{equation} \label{eq:score_likelihood}
        \nabla_{\!x_{t}^{k+1}}^{\top}(\mathrm{m})\mathrm{H}^{\top} (\Sigma_{y} + \mathrm{H}\mathrm{V}\mathrm{H}^{\top})^{-1}\left[y^{k+1} - \mathcal{H}(m) \right].
    \end{equation}
    The covariance matrix $\mathrm{V} = \mathbb{V}[x^{k+1} \mid x_{t}^{k+1}, x^{k}]$ is unknown a priori, but can be computed with the denoiser using the second-order Tweedie's formula (see Appendix \ref{appendix:Tweedie}) as
    \begin{equation} \label{eq:Tweedie_2nd_ordre}
        \mathbb{V}[x^{k+1} \mid x_{t}^{k+1}, x^{k}] = \alpha_{t}^{-1} \Sigma_{t} \nabla_{\!x_{t}^{k+1}}^{\top} (\mathrm{m}).
    \end{equation}
    Since the Jacobian $\nabla_{\!x_{t}^{k+1}}^{\top}d_{\theta}(t, x^{k}, x^{k+1}_{t}) \in \mathbb{R}^{n \times n}$ is intractable in high dimension and $(\Sigma_{y} + \mathrm{H}\mathrm{V}\mathrm{H}^{\top})$ is symmetric positive definite, we consider only implicit access to the covariance matrix via automatic differentiation and we solve the linear system in Equation~\eqref{eq:score_likelihood} using a linear solver \cite{GMRES, BiCG}. 

    Putting all these elements together, we can finally compute the score of the posterior and plug it into Equation~ \eqref{eq:reverse_diffusion_equation} to generate samples from the optimal proposal $p(x^{k+1} \mid x^{k}, y^{k+1})$. The resulting procedure is summarized in Algorithm~\ref{algo:sampling_optimal_proposal}.

    \begin{algorithm}[h!]
        \caption{Sampling from the optimal proposal}
        \begin{algorithmic}
            \STATE {\bfseries Inputs:} $x^{k}$, $y^{k+1}$, $d_{\theta}$, $\Delta_{t}$, Solver

            \vspace{0.2cm}

            \STATE $x^{k+1}_{t=1} \sim \mathcal{N}(0, \Sigma_{1})$

            \vspace{0.2cm}
                
            \FOR{$t$ in $[1, 1 - \Delta_{t}, \cdots, \Delta_{t}]$}

                \vspace{0.2cm}

                \STATE \quad $s_{x} \gets \Sigma_{t}^{-1} \left( \alpha_{t} d_{\theta}(t, x^{k}, x^{k+1}_{t}) - x^{k+1}_{t} \right)$~(Eq. \ref{eq:first_order_Tweedie})

                \vspace{0.2cm}
            
                \STATE \quad $s_{y} \gets \mathrm{MMPS}(t, x_{t}^{k+1}, x^{k}, y^{k+1})$~(Eq. \ref{eq:score_likelihood})

                \vspace{0.2cm}
                                
                \STATE \quad $s_{x,y} \gets s_{x} + s_{y}$~(Eq. \ref{eq:score_bayes_diffusion})

                \vspace{0.2cm}
                
                \STATE \quad $x^{k+1}_{t-\Delta_{t}} \gets \mathrm{Solver}(t, \Delta_{t}, x^{k+1}_{t}, s_{x,y}) $~(Eq. \ref{eq:reverse_diffusion_equation})

                \vspace{0.2cm}
            
            \ENDFOR

            \vspace{0.2cm}
                
            \STATE {\bfseries Return} $x^{k+1}_{t=0}$ 
        \end{algorithmic}
        \label{algo:sampling_optimal_proposal}
    \end{algorithm}
        
    \subsection{Computing weights} \label{subsection: weights}
        The final ingredient required to apply FA-APF is the computation of particle weights. When using the optimal proposal $q(x^{k+1} \mid x^{k}, y^{k+1}) = p(x^{k+1} \mid x^{k}, y^{k+1})$, the unnormalized weights in Equation~\eqref{eq:unnormalized_weights} simplify to
        \begin{equation} \label{eq:weight_optimal_proposal}
            \hat{w}^{k+1}_{i} = p(y^{k+1} \mid x^{k}_{i}) \times w^{k}_{i},
        \end{equation}
        where $w^{k}_{i}$ denotes the normalized weight of particle $i$ at time step $k$. 
        
        This quantity is not directly available, since the observation $y^{k+1}$ does not depend explicitly on $x^{k}$. We therefore approximate the transition distribution $p(x^{k+1} \mid x^{k})$ by a Dirac mass at its conditional mean $\mathbb{E}[x^{k+1} \mid x^{k}]$ \cite{Proba}. This yields
        \begin{align} \label{eq:weight_approximation}
             &p(y^{k+1} \mid x^{k}_{i}) \\
             &= \int p(y^{k+1} \mid x^{k+1}) p(x^{k+1} \mid x_{i}^{k}) \mathrm{d}x^{k+1} \\
             & \approx p(y^{k+1} \mid \mathbb{E}[x^{k+1} \mid x_{i}^{k}]) \\
             &= \mathcal{N}(\mathcal{H}(\mathbb{E}[x^{k+1} \mid x_{i}^{k}]), \Sigma_{y}).
        \end{align}

        For diffusion emulators, the conditional mean $\mathbb{E}[x^{k+1} \mid x^{k}_{i}]$ can be directly computed with one call to the trained denoiser as
        \begin{align} \label{eq:next_state_expectation}
        \mathbb{E}[x^{k+1} \mid x^{k}]  &\underset{\varepsilon \sim \mathcal{N}(0,I)}{=}  \mathbb{E}[x^{k+1} \mid x^{k}, \sigma_{1}\varepsilon] \\
        &\approx d_{\theta}\left(x^{k+1}_{t=1} = \sigma_{1}\varepsilon, x^{k}, t = 1 \right).
        \end{align}
        This approximation provides an efficient estimate of the particle weights and corresponds to lines 5–10 of Algorithm~\ref{algo:FA-APF}.

\section{Experiments}
        
    \subsection{Lorenz’63 System} \label{subsection: Lorenz63}
    We first evaluate our method on Lorenz'63 (L63), a simple yet widely used system to assess filtering performance. In one of its stochastic formulation \cite{L63_Stochastic}, this system evolves according to
    \begin{equation}
    \begin{cases}
            \dot{x} =  s(y - x) + x\sigma dw_{t}, \\
            \dot{y} = x(r - z) - y + y\sigma dw_{t}, \\
            \dot{z} = xy - bz + z\sigma dw_{t},
        \end{cases}
    \end{equation}
    where $(s, r, b)$ are the parameters of the dynamic, and $\sigma$ a parameter controlling the level of stochasticity. We set $(s=10, r=28, b=8/3)$ to ensure a chaotic regime, typical of the systems encountered in practice, and $\sigma = 0.25$.

    Numerical integration is performed using a Milstein scheme \cite{Milstein} with a time step of $10^{-3}$, and we collect snapshots of the system states every $0.5$ time unit to create a dataset of successive states $\{(x^{k}_{i}, x^{k+1}_{i})\}_{i \in \mathcal{I}}$. The latter contains $10 000$ trajectories of $100$ snapshots each, and is used to train a denoiser with the loss of Equation~\eqref{eq:training_conditional_diffusion}.

    The denoiser is a residual network \cite{resnet} with six hidden layers and follows the formalism of \citet{EDM}. In particular, we adopt the same noise scheduling, with $\alpha_{t} = 1$ and $\sigma_{t}$ given by
    \begin{equation} \label{eq:noise_scheduling_EDM}
        \sigma_{t} = \left( \sigma_{\mathrm{max}}^{1/\rho} + (1-t)  \times \left( \sigma_{\mathrm{min}}^{1/\rho} - \sigma_{\mathrm{max}}^{1/\rho}\right)\right)^{\rho},
    \end{equation}
    where $\rho = 7$, $\sigma_{\mathrm{min}} = 10^{-3}$ and $\sigma_{\mathrm{max}} = 10^{3}$. The denoiser is trained for 10 epochs and, at inference time, we use a DDIM sampler \cite{DDIM} with 64 steps to solve the reverse diffusion Equation~\eqref{eq:reverse_diffusion_equation} and generate samples from the optimal proposal (see Algorithm \ref{algo:sampling_optimal_proposal}).

    We study the ability of our method to approximate the filtering distribution as a function of the number of particles ($N$ in Algorithm \ref{algo:FA-APF}), also called number of ensemble members in the classical data assimilation literature \cite{data_assimilation}. This analysis is essential for practical applications, as high-dimensional systems typically prevent the use of large ensembles because of computational cost \cite{num_ens_members}. We consider observations of the first and last component of the system, corrupted by Gaussian noise with standard deviation $0.25$. This formally corresponds to define the observation $y^{k}$ in Equation~\eqref{eq:observation_model} as
    \begin{equation}
        y^{k} = \underbrace{\begin{bmatrix} 1 & 0 & 0 \\ 0 & 0 & 1  \end{bmatrix} \begin{bmatrix} x \\ y \\ z \end{bmatrix}}_{\mathcal{H}(x^{k})} + \underbrace{\begin{bmatrix} (0.25)^{2} & 0\\ 0 & (0.25)^{2}  \end{bmatrix}  \varepsilon}_{\Sigma_{y}},
    \end{equation}
    with $\varepsilon \sim \mathcal{N}(0,I_{2})$. We compare our approach with other ensemble methods that operate directly using the transition model and the observation operator to approximate the filtering distribution at each step $k$ by an ensemble of particles. These methods are described in Appendix \ref{appendix:DA_algo}. For each method and ensemble size, we perform $32$ filtering runs and report the average skill, defined as the root mean square error (RMSE) between the ensemble mean and the true state $x^{k}$, on Figure \ref{fig:skill_particles}.
    
    \vspace{-0.25in} 
    \begin{figure}[h]
      \vskip 0.2in
      \begin{center}
        \centerline{\includegraphics[width=\columnwidth]{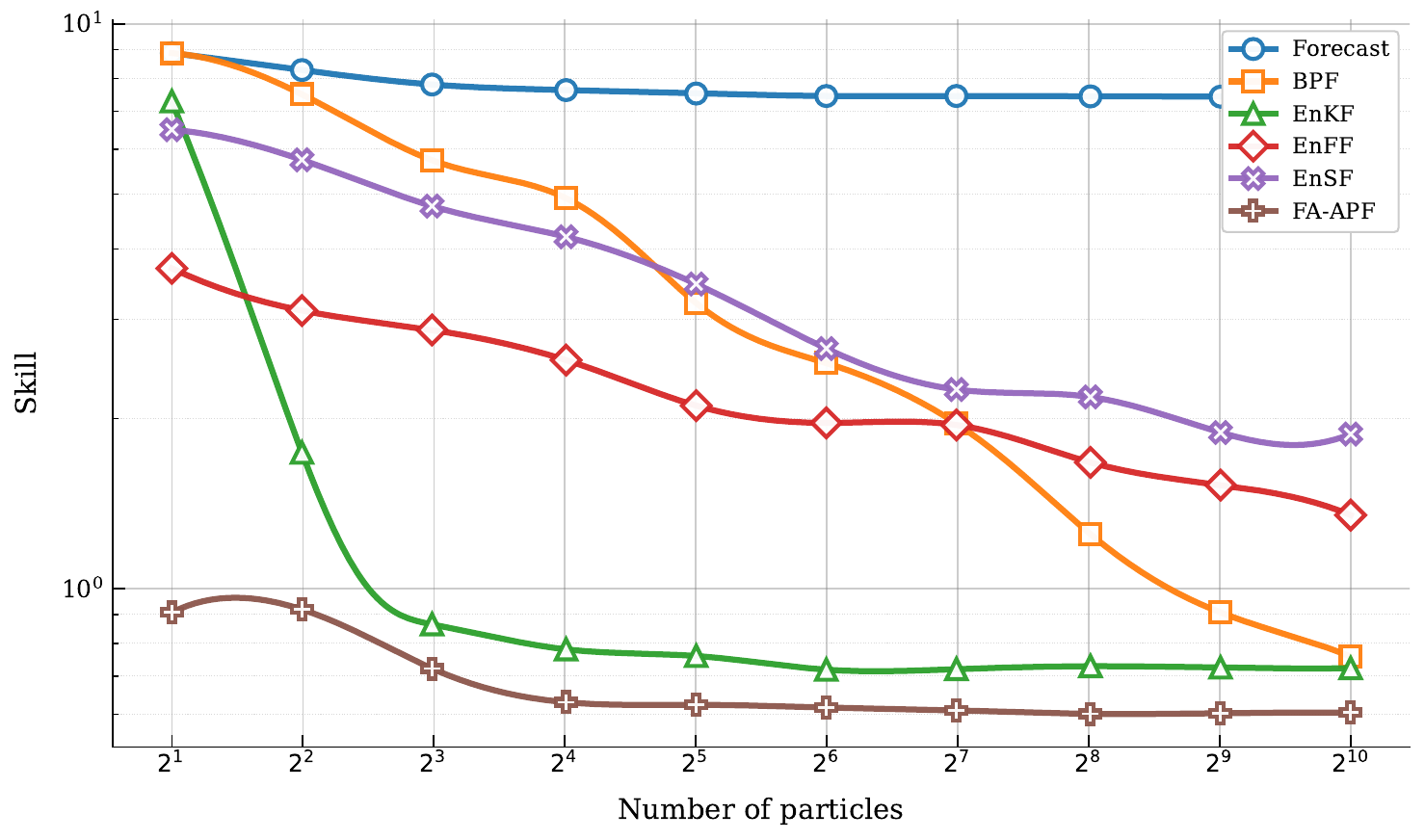}}
        \caption{
          Evolution of the average skill as a function of the number of ensemble members.
        }
        \label{fig:skill_particles}
      \end{center}
    \end{figure}
    \vspace{-0.25in}  
    
    Our method (brown curve on Figure \ref{fig:skill_particles}) consistently yields lower errors than competing algorithms for a given number of members, and outperforms them even when using fewer members. In particular, FA-APF substantially outperforms the classical particle filter (BPF, orange curve on Figure \ref{fig:skill_particles}), which is known to require many particles for reliable performance. This improvement is due to the use of the optimal proposal in FA-APF (whereas the classical BPF relies on the standard proposal $q(x^{k+1} \mid x^{k}, y^{k+1}) = p(x^{k+1} \mid x^{k})$), and highlights the potential of particle filters for high-dimensional applications. We also note the strong performance of EnKF \cite{EnKF_Evensen} with a limited number of particles, explaining its widespread use in operational meteorology centers. Table \ref{table:experiments_lorenz63} reports the results for all algorithms with a fixed ensemble size of $N=256$. The first five rows refer to the previous experiment for which we observe the first and last components, while the last five rows correspond to a second experiment where we only observe the first component with a standard deviation of 0.25.

    \begin{table}[h]
      \caption{Average skill, spread-to-skill ratio (SSR) and CRPS (see Appendix \ref{appendix:metrics}) on 32 filtering runs for each method.}
      \begin{center}
        \begin{small}
          \begin{sc}
            \begin{tabular}{lcccr}
              \toprule
                    & Skill $(\downarrow)$ & SSR $(\approx 1)$ & CRPS $(\downarrow)$  \\
              \midrule
              BPF  & $1.25$ & $0.32$ & $2.88$  \\
              EnKF & $\underline{0.72} $ & $\underline{1.25}$ & $\underline{1.10}$ \\
              EnSF & $2.18$ & $\mathbf{0.78}$ & $4.23$ \\
              EnFF & $1.67$ & $\underline{1.25}$ & $2.51$ \\
              FA-APF & $\mathbf{0.6}$ & 1.49 & $\mathbf{0.98}$ \\
              \midrule
              BPF  & $\underline{2.38}$ & $\mathbf{0.85}$ & $\underline{4.35} $  \\
              EnKF & $2.79$ & $1.21$ & $\underline{4.35}$ \\
              EnSF & $3.76$ & $\underline{1.19}$ & $6.32$ \\
              EnFF & $3.67$ & $1.39$ & $5.96$ \\
              FA-APF & $\mathbf{1.87}$ & 1.25 & $\mathbf{3.05}$ \\
              \bottomrule
            \end{tabular}
          \end{sc}
        \end{small}
      \end{center}
      \vskip -0.1in
      \label{table:experiments_lorenz63}
    \end{table}

    \newpage
    \subsection{Incompressible Navier-Stokes Flow} \label{subsection: NS}
    Another important aspect in practice is the robustness of the algorithm to sparse observations, that is, when the observation space has a much lower dimension than the state space. For example, methods such as EnSF \cite{EnSF} and EnFF \cite{EnFF}, although theoretically elegant, perform significantly worse in such regimes \cite{Latent_EnSF}.

    To evaluate our method under these conditions, we consider a high-dimensional system describing an incompressible fluid governed by the 2D Navier–Stokes equations with random forcing on the torus $\mathbb{T}^{2} = [0, 2\pi]^{2}$
    \begin{equation}
        \mathrm{d}\mathbf{\omega} + v \cdot \nabla \omega ~\mathrm{d}t = \nu \Delta \omega ~ \mathrm{d}t - \alpha \omega~ \mathrm{d}t + \varepsilon \mathrm{d}\xi,
    \end{equation}
    where $\omega$ represents the vorticity, $v$ the velocity, and $\xi$ a white-in-time random forcing acting on Fourier modes. We directly rely on the dataset of \citet{probabilistic_forecasting_interpolants}, which consists of $\mathcal{O}(10^{5})$ consecutive vorticity snapshots sampled every 0.5 time units on a $256 \times 256$ grid, that we downsample to $128\times128$ for computational efficiency.

    We use this dataset to train a denoiser by minimizing the loss function defined in Equation~\eqref{eq:training_conditional_diffusion}, using the same formalism and noise scheduler as the previous experiment on Lorenz'63. The backbone of the denoiser is a U-Net \cite{U-Net} with $\mathcal{O}(10^{7})$ parameters, and is trained for 20 epochs. At inference time, we use the same DDIM sampler with only 32 steps to accelerate generation when solving Equation~\eqref{eq:reverse_diffusion_equation}.

    For the experiments, we consider a subsampling operator that selects specific points of the grid, and a coarsening operator that pixelates system states (see Figure \ref{fig:example_filtering_ns}). Observations are corrupted with Gaussian noise of standard deviation $0.1$. We consider observations at three dimensional levels, corresponding to the rows of Table \ref{table:experiments_ns}. For instance, $(32,32)$ indicates an observation with dimension $32 \times 32$, corresponding to $6.25\%$ of the full system state. Table \ref{table:experiments_ns} reports the average skill over ten filtering experiments for the coarsening operator (first three rows) and the subsampling operator (last three rows) for BPF, FA-APF, and FlowDAS \cite{FlowDAS} with $128$ particles.

    \begin{table}[h]
      \caption{Average skill for BPF, FlowDAS and FA-APF.}
      \begin{center}
        \begin{small}
          \begin{sc}
            \begin{tabular}{c|c|ccc}
              \cmidrule[\heavyrulewidth]{3-5} 
              \multicolumn{1}{c}{} & & \multicolumn{3}{c}{Algorithms} \\
                        \cmidrule{1-2} \cmidrule(lr){3-5}
              $\mathcal{H}$ & $d$ & BPF & FlowDAS & FA-APF \\
              \midrule
              \multirow{3}{*}{coarse} 
              &$(8,8)$ & $3.02$ & $\underline{2.78}$ & $\mathbf{2.39}$\\
              &$(16,16)$ & $2.88$ & $\underline{0.74}$ & $\mathbf{0.63}$\\
              &$(32,32)$ & $2.80$ & $\underline{0.19}$ & $\mathbf{0.14}$ \\
              \midrule
              \multirow{3}{*}{sparse} 
              &$(8,8)$  & $\underline{2.96}$ & $3.08$ & $\mathbf{2.30}$ \\
              &$(16,16)$ & $2.91$ & $\underline{1.9}$ & $\mathbf{1.12}$ \\
              &$(32,32)$  &$2.88$ & $\underline{0.20}$ &  $\mathbf{0.13} $\\
              \bottomrule
            \end{tabular}
          \end{sc}
        \end{small}
      \end{center}
      \vskip -0.1in
      \label{table:experiments_ns}
    \end{table}
    
    \newpage
    We consistently observe a lower skill with FA-APF and strong qualitative results as in Figure \ref{fig:example_filtering_ns}, especially compared to the classical particle filter (BPF), which fails entirely under such high-dimensional settings, confirming that it is unusable in realistic geophysical systems \cite{pf_degeneracy}.

    Our method is also more robust than FlowDAS, which can be considered as training-free if an interpolant of the dynamics is already available \cite{probabilistic_forecasting_interpolants}. Actually, as presented by \citet{FlowDAS}, FlowDAS corresponds to a degenerate fully adapted filter that propagates its single particle sequentially using the optimal proposal. Extending this approach to an ensemble of particles yields a pseudo FA-APF that ignores particle weights and does not select which particles to propagate at each step, resulting in a distribution that deviates from the true filtering distribution. Our algorithm can therefore be seen as a natural ensemble-based extension of FlowDAS, while remaining rigorously grounded in the particle filtering formalism.
    
    \begin{figure}[h]
      \begin{center}
\centerline{\includegraphics[width=0.9\columnwidth]{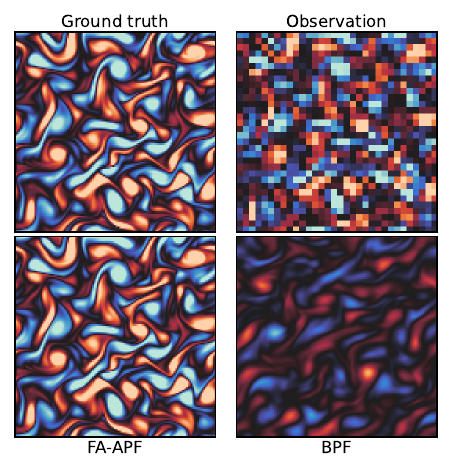}}
        \caption{
          Ground truth, coarse $32 \times 32$ observation, FA-APF ensemble mean and BPF ensemble mean at the last step of a filtering experiment. Examples of trajectories are given in Appendix \ref{appendix:additional_results}.  
        }
        \vskip -0.1in
        \label{fig:example_filtering_ns}
      \end{center}
    \end{figure}

    Like FlowDAS, our method can be adapted to the stochastic interpolant framework \cite{Interpolants}, as detailed in Appendix \ref{appendix:stochastic interpolants}. This framework corresponds to a family of generative models that generalize diffusion and flow matching \cite{FlowMatching}, and can thus be applied to a wide range of generative emulators. For clarity, we adopted the diffusion framework in this study, as it is the most established and widely used framework, and it was also the framework behind GenCast \cite{GenCast}, which we examined in the next experiment.

    \begin{figure*}[ht]
        \begin{center}
            \centerline{\includegraphics[width=0.95\textwidth]{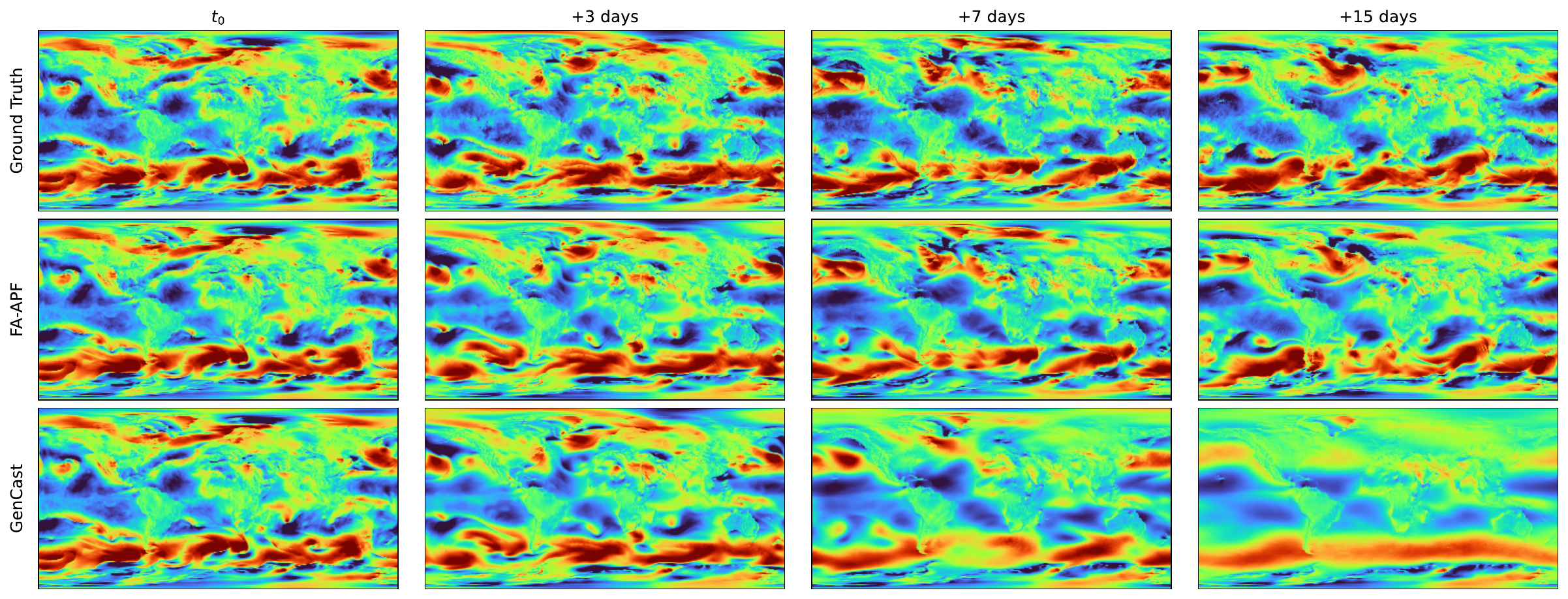}}
            \caption{
              Comparison of the 10m U component of wind between the reference ERA5 trajectory (first row), the FA-APF ensemble mean obtained with realistic observations (second row), and the GenCast ensemble mean (third row) after 3, 7, and 15 days.
            }
            \vskip -0.2in
            \label{fig:10m_U_component_of_wind}
        \end{center}
    \end{figure*}

    \newpage
    \vspace{-0.25in} 
    \subsection{Medium-range weather forecasts (GenCast)} \label{subsection: GenCast}
    In the final experiment, we apply our method in a real-world scenario by leveraging the denoiser from GenCast \cite{GenCast}, a diffusion emulator of the atmosphere trained on the ERA5 reanalysis dataset \cite{ERA5}. In this setting, the system state $x^{k}$ is high-dimensional, as it consists of $83$ surface and atmospheric variables defined on a 1° latitude-longitude grid, resulting in $\mathcal{O}(10^{6})$ variables.

    An ensemble of $N=256$ particles is initialized using the first state of a reference ERA5 trajectory that was not included in the training dataset. This trajectory is then used as ground truth to evaluate the metrics and to generate observations over a 15-day period, corresponding to 30 time steps with the 12-hour resolution of GenCast. We study two observation scenarios:
    \begin{itemize}
        \item \textbf{Sparse temperature observations}. 
        We subsample and coarsen the latitude–longitude grid by retaining one point out of 4 in each direction and averaging $4 \times 4$ non-overlapping patches. We observe only temperature, the most readily available variable in practice from weather stations and satellites. This corresponds to observing approximately 1\% of the full system state. Gaussian observation noise with standard deviation 0.2 is added to reflect modern temperature measurement accuracy.
        \item \textbf{Realistic observations}.
        We consider ground-based stations inspired by real-world station locations. These stations measure 4 of the 6 surface variables with Gaussian noise based on the performance of current measuring instruments. In addition, drawing inspiration from the setting of \citet{Appa}, we include satellite observations of atmospheric temperature bands with Gaussian noise of 0.5K, reflecting the indirect nature of temperature retrieval from radiance measurements.
    \end{itemize}

    Figure \ref{fig:skill_gencast} shows the evolution of the skill for two surface variables (U component of wind and temperature) across successive assimilation steps. Results are shown for our filtering method under both observation scenarios and for an ensemble of forecasts with the same ensemble size.

    \vspace{-0.2in}
    \begin{figure}[h]
      \vskip 0.2in
      \begin{center}
        \centerline{\includegraphics[width=\columnwidth]{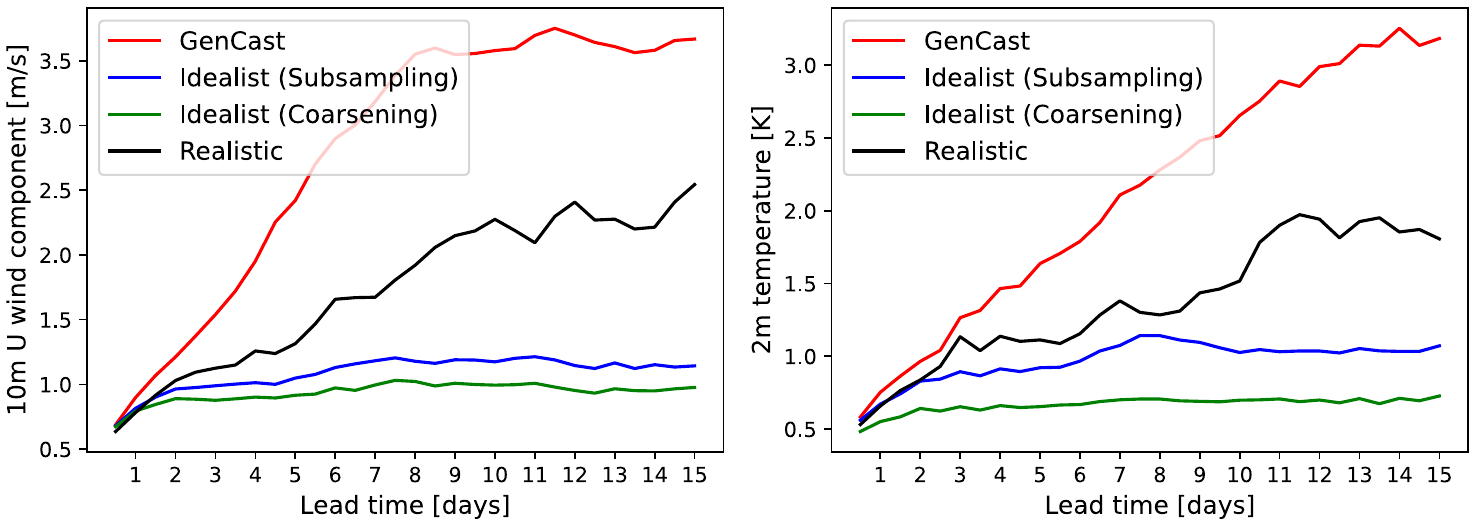}}
        \caption{
          Skill comparison between FA-APF with sparse temperature observations (blue and green curves), FA-APF with realistic observations (black curve), and the ensemble of unconditional GenCast trajectories (red curve) for the surface variables.
        }
        \label{fig:skill_gencast}
      \end{center}
    \end{figure}
    \vspace{-0.2in}

    For the first observation setting, based on subsampled and coarsened temperature observations, we obtain a stable approximation of the system state after 7 days of observations. This holds even for unobserved variables, such as wind in Figure \ref{fig:skill_gencast}, while maintaining a non-zero ensemble spread. Spread and additional skill curves are given in Appendix \ref{appendix:additional_results}.

    Under the more realistic observation setting, convergence toward a stable skill across all variables is not achieved, even after 15 days of observations. This is mainly due to the strong spatial inhomogeneity of the observations, both at the surface and in the atmosphere, particularly near the poles, where errors grow rapidly in the absence of measurements. Nevertheless, this setting has intentionally been made challenging, and operational systems would typically have access to denser observations. Figure \ref{fig:10m_U_component_of_wind} illustrates the ensemble mean obtained by filtering in this setting, which remains qualitatively close to the true system state.

\newpage
\section{Related work} \label{section:Related_Work}
    Variational methods such as 4D-Var \cite{4D_VAR} have been widely used in operational centers \cite{Arome3DEnVAR} and are effective in practice. However, they rely on tangent linear and adjoint models, which are computationally expensive and may fail to capture strongly nonlinear dynamics, such as extreme weather events in the case of the atmosphere. Moreover, they provide only point estimates rather than full probabilistic predictions, preventing uncertainty quantification.

    Ensemble Kalman filters (EnKF) and their variants, such as LETKF \cite{LETKF}, are another widely used class of methods in operational weather centers \cite{Arome3DEnVAR}. They provide probabilistic state estimates at each assimilation step but are theoretically valid only under linear dynamics, linear observation operator, and Gaussian noise assumptions. In practice, heuristics such as covariance inflation and localization are needed to handle nonlinear, high-dimensional systems. Despite these limitations, our experiments show that EnKF variants remain competitive.

    More recently, training-free methods based on generative models have been introduced \cite{EnSF, EnFF}. These methods rely only on the transition model $p(x^{k+1} \mid x^{k})$ and the observation operator $\mathcal{H}$. Starting from the filtering distribution at time $k$, they forecast particles to the next time step and estimate a vector field that maps the current filtering distribution to the next predictive distribution. This vector field is then adjusted using gradients of the observation operator to converge toward the next filtering distribution. While elegant in theory, these approaches are effective only under dense observation regimes \cite{Latent_EnSF}, which are rare in practice.

    Other generative approaches require training of specific models. The Score-Based Data Assimilation framework \cite{SDA} focuses on Bayesian smoothing by training a local score network and combining these local scores to generate a full trajectory consistent with the observations. The DAISI framework \cite{DAISI} proposes an iterative filtering algorithm based on a stochastic interpolant learned directly from the data. However, this method is not guaranteed to correspond to proper Bayesian filtering and depends on the efficiency of the forecast model.

    Finally, our work is closely related to FlowDAS \cite{FlowDAS}, which can be viewed as a training-free method adaptable to generative emulators of dynamical systems. However, FlowDAS is also not mathematically grounded, as its algorithm does not provide an exact solution to the Bayesian filtering problem, except in the trivial case of a single particle, thereby losing the advantage of ensemble methods for uncertainty quantification. Our method can thus be seen as a generalization of FlowDAS to proper Bayesian filtering.

\section{Conclusion}
In this work, we showed that generative emulators of dynamical systems can be adapted, without additional training, to address Bayesian filtering through an optimized version of particle filters \cite{FA_APF}. Although this variant is known to be more efficient than the classical particle filter \cite{Optimal_Proposal_Efficiency}, it has remained largely impractical due to the difficulty of sampling from the optimal proposal, a limitation we address using training-free posterior sampling such as MMPS \cite{MMPS}.

Our results demonstrate that this approach consistently outperforms the classical particle filter and competing methods for a fixed ensemble size (Section \ref{subsection: Lorenz63}), and remains effective in high-dimensional settings with sparse observations (Sections \ref{subsection: NS} and \ref{subsection: GenCast}), successfully scaling to problems with up to $\mathcal{O}(10^6)$ variables. These findings show that our method can be successfully applied to realistic large-scale problems without relying on linearization or restrictive assumptions on the system dynamics.

\section{Limitations \& Future work}
A fundamental limitation of our approach is its reliance on generative emulators of the system dynamics. While such models are becoming increasingly popular \cite{lola, DiffusionLAM, Diffusion_Sea_Ice, GenCast}, in particular because they preserve uncertainty and avoid long-term over-smoothing, they are still less used than classical numerical solvers and deterministic neural emulators \cite{IFS, GraphCast}.

Sampling from the optimal proposal is also computationally expensive, as it requires differentiating the denoiser at each step of the reverse diffusion process to estimate the score of the posterior. For very large-scale systems such as GenCast (Section \ref{subsection: GenCast}), this results in significant memory overhead (see Appendix~\ref{appendix:details_algo} for further details on the complexity of Algorithm \ref{algo:FA-APF}). An important direction for future work is to develop training-free posterior sampling methods that remain accurate for highly nonlinear observation operators while being computationally efficient.

The performance of our method is further constrained by the quality of the transition model, the approximations used to sample from the optimal proposal and compute particle weights, and the use of inflation in Algorithm \ref{algo:FA-APF} to mitigate weight degeneracy. Additional details on these approximations are given in Appendix \ref{appendix:details_algo}.

Finally, assuming access to generative models capable of directly sampling joint distributions $p(x^{k}, x^{k+1})$, a natural extension of this work would be to adapt the proposed framework for training-free Bayesian smoothing \cite{Smoothing_Doucet, Particle_Smoothing}.

\newpage

\section*{Acknowledgments and Disclosure of Funding}
We acknowledge the support of the F.R.S.-FNRS (Belgium) and its funding of the Mosaic project (MIS F.4536.25). François Rozet is a research fellow of the F.R.S.-FNRS and acknowledges its financial support.

The present research benefited from computational resources made available on Lucia, the Tier-1 supercomputer of the Walloon Region, infrastructure funded by the Walloon Region under the grant agreement n°1910247. We also acknowledge the support of NVIDIA Corporation for computing resources offered through the Academic Grant Program.

\section*{Impact Statement}
This paper presents work whose goal is to advance the fields of Machine Learning and Data Assimilation. There are many potential societal consequences of our work, none of which we feel must be specifically highlighted here.

\bibliography{references}

@article{Lorenz63,
    author = "Edward N.  Lorenz",
    title = "{D}eterministic {N}onperiodic {F}low",
    journal = "Journal of Atmospheric Sciences",
    year = "1963",
    publisher = "American Meteorological Society",
    address = "Boston MA, USA",
    volume = "20",
    number = "2",
    doi = "10.1175/1520-0469(1963)020<0130:DNF>2.0.CO;2",
    pages="130 - 141",
    url = "https://journals.ametsoc.org/view/journals/atsc/20/2/1520-0469_1963_020_0130_dnf_2_0_co_2.xml"
}

@article{NavierStokesSimulation,
    ISSN = {00255718, 10886842},
    URL = {http://www.jstor.org/stable/2004575},
    author = {Alexandre Joel Chorin},
    journal = {Mathematics of Computation},
    number = {104},
    pages = {745--762},
    publisher = {American Mathematical Society},
    title = {{N}umerical {S}olution of the {N}avier-{S}tokes {E}quations},
    urldate = {2025-08-14},
    volume = {22},
    year = {1968}
}

@book{Hairer,
  address = {Berlin},
  author = {Hairer, E. and N{\o}rsett, S.P. and Wanner, G.},
  edition = {Second},
  publisher = {Springer},
  title = {Solving Ordinary Differential Equations {I} Nonstiff problems},
  year = 2000
}

@InProceedings{cnn_simulation,
    title = {{A}ccelerating {E}ulerian {F}luid {S}imulation {W}ith {C}onvolutional {N}etworks},
    author = {Jonathan Tompson and Kristofer Schlachter and Pablo Sprechmann and Ken Perlin},
    booktitle = {Proceedings of the 34th International Conference on Machine Learning},
    pages = {3424--3433},
    year = {2017},
    editor = {Precup, Doina and Teh, Yee Whye},  volume = {70}, 
    series = {Proceedings of Machine Learning Research},  month =  {06--11 Aug},  publisher =    {PMLR},
    url = {https://proceedings.mlr.press/v70/tompson17a.html}
}

@article{GraphCast,
    author = {Remi Lam  and Alvaro Sanchez-Gonzalez  and Matthew Willson  and Peter Wirnsberger  and Meire Fortunato  and Ferran Alet  and Suman Ravuri  and Timo Ewalds  and Zach Eaton-Rosen  and Weihua Hu  and Alexander Merose  and Stephan Hoyer  and George Holland  and Oriol Vinyals  and Jacklynn Stott  and Alexander Pritzel  and Shakir Mohamed  and Peter Battaglia },
    title = {Learning skillful medium-range global weather forecasting},
    journal = {Science},
    volume = {382},
    number = {6677},
    pages = {1416-1421},
    year = {2023},
    doi = {10.1126/science.adi2336},
    URL = {https://www.science.org/doi/abs/10.1126/science.adi2336},
    eprint = {https://www.science.org/doi/pdf/10.1126/science.adi2336}
}

@inproceedings{diffusion,
    author = {Song, Yang and Sohl-Dickstein, Jascha and Kingma, Diederik P and Kumar, Abhishek and Ermon, Stefano and Poole, Ben},
    booktitle = {International Conference on Learning Representations},
    title = {{S}core-{B}ased {G}enerative {M}odeling through {S}tochastic {D}ifferential {E}quations},
    url = {https://openreview.net/forum?id=PxTIG12RRHS},
    year = 2021
}

@inproceedings{FlowMatching,
    title={{F}low {M}atching for {G}enerative {M}odeling},
    author={Yaron Lipman and Ricky T. Q. Chen and Heli Ben-Hamu and Maximilian Nickel and Matthew Le},
    booktitle={The Eleventh International Conference on Learning Representations },
    year={2023},
    url={https://openreview.net/forum?id=PqvMRDCJT9t}
}

@article{Interpolants,
    author  = {Michael Albergo and Nicholas M. Boffi and Eric Vanden-Eijnden},
    title   = {{S}tochastic {I}nterpolants: {A} {U}nifying {F}ramework for {F}lows and {D}iffusions},
    journal = {Journal of Machine Learning Research},
    year    = {2025},
    volume  = {26},
    number  = {209},
    pages   = {1--80},
    url     = {http://jmlr.org/papers/v26/23-1605.html}
}

@InProceedings{probabilistic_forecasting_interpolants,
    title = {{P}robabilistic {F}orecasting with {S}tochastic {I}nterpolants and {F}öllmer {P}rocesses},
    author = {Chen, Yifan and Goldstein, Mark and Hua, Mengjian and Albergo, Michael Samuel and Boffi, Nicholas Matthew and Vanden-Eijnden, Eric},
    booktitle = {Proceedings of the 41st International Conference on Machine Learning},
    pages = {6728--6756},
    year = {2024},
    editor = {Salakhutdinov, Ruslan and Kolter, Zico and Heller, Katherine and Weller, Adrian and Oliver, Nuria and Scarlett, Jonathan and Berkenkamp, Felix},
    volume = {235},
    series = {Proceedings of Machine Learning Research},
    month = {21--27 Jul},
    publisher = {PMLR},
    url = {https://proceedings.mlr.press/v235/chen24n.html}
}

@inproceedings{lola,
    title={{L}ost in {L}atent {S}pace: {A}n {E}mpirical {S}tudy of {L}atent {D}iffusion {M}odels for {P}hysics {E}mulation},
    author = {Fran{\c{c}}ois Rozet and Ruben Ohana and Michael McCabe and Gilles Louppe and Fran{\c{c}}ois Lanusse and Shirley Ho},
    booktitle={The Thirty-ninth Annual Conference on Neural Information Processing Systems},
    year={2025},
    url={https://openreview.net/forum?id=xoNrbfbekM}
}

@book{intro_to_chaos,
    author = {Devaney, Robert L.},
    title = {An Introduction to Chaotic Dynamical Systems},
    edition = {2},
    year = {2003},
    publisher = {CRC Press},
    doi = {10.4324/9780429502309},
    url = {https://doi.org/10.4324/9780429502309}
}

@article{data_assimilation,
    author = {Carrassi, Alberto and Bocquet, Marc and Bertino, Laurent and Evensen, Geir},
    title = {Data assimilation in the geosciences: An overview of methods, issues, and perspectives},
    journal = {WIREs Climate Change},
    volume = {9},
    number = {5},
    pages = {e535},
    doi = {https://doi.org/10.1002/wcc.535},
    url = {https://wires.onlinelibrary.wiley.com/doi/abs/10.1002/wcc.535},
    eprint = {https://wires.onlinelibrary.wiley.com/doi/pdf/10.1002/wcc.535},
    year = {2018}
}

@book{EnKF_Evensen,
    address = {Berlin},
    author = {Evensen, Geir},
    doi = {10.1007/978-3-642-03711-5},
    edition = {Second},
    isbn = {978-3-642-03710-8},
    publisher = {Springer},
    title = {Data Assimilation : The Ensemble Kalman Filter},
    url = {https://link.springer.com/book/10.1007/978-3-642-03711-5},
    year = 2009
}

@article{lorenc_data_assimilation,
    author = {Lorenc, A. C.},
    title = {Analysis methods for numerical weather prediction},
    journal = {Quarterly Journal of the Royal Meteorological Society},
    volume = {112},
    number = {474},
    pages = {1177-1194},
    doi = {https://doi.org/10.1002/qj.49711247414},
    url = {https://rmets.onlinelibrary.wiley.com/doi/abs/10.1002/qj.49711247414},
    eprint = {https://rmets.onlinelibrary.wiley.com/doi/pdf/10.1002/qj.49711247414},
    year = {1986},
}

@inproceedings{SDA,
    title={{S}core-based {D}ata {A}ssimilation},
    author={Fran{\c{c}}ois Rozet and Gilles Louppe},
    booktitle={Thirty-seventh Conference on Neural Information Processing Systems},
    year={2023},
    url={https://openreview.net/forum?id=VUvLSnMZdX},
}

@inproceedings{DiffDA,
    author = {Huang, Langwen and Gianinazzi, Lukas and Yu, Yuejiang and Dueben, Peter D. and Hoefler, Torsten},
    title = {{D}iff{DA}: a {D}iffusion model for weather-scale {D}ata {A}ssimilation},
    year = {2024},
    publisher = {JMLR.org},
    booktitle = {Proceedings of the 41st International Conference on Machine Learning},
    articleno = {797},
    numpages = {18},
    location = {Vienna, Austria},
    series = {ICML'24}
}

@article{EnSF,
    title = {An ensemble score filter for tracking high-dimensional nonlinear dynamical systems},
    journal = {Computer Methods in Applied Mechanics and Engineering},
    volume = {432},
    pages = {117447},
    year = {2024},
    issn = {0045-7825},
    doi = {https://doi.org/10.1016/j.cma.2024.117447},
    url = {https://www.sciencedirect.com/science/article/pii/S0045782524007023},
    author = {Feng Bao and Zezhong Zhang and Guannan Zhang}
}

@book{BayesianFiltering, 
    place={Cambridge},
    series={Institute of Mathematical Statistics Textbooks},
    title={Bayesian Filtering and Smoothing},
    publisher={Cambridge University Press},
    author={Särkkä, Simo},
    year={2013},
    collection={Institute of Mathematical Statistics Textbooks}
}

@article{Arome3DEnVAR,
    author={Brousseau, P. and Vogt, V. and Arbogast, E. and Martet, M. and Thomas, G. and Berre, L.},
    title={The operational {3DE}n{V}ar data assimilation scheme for the {M}étéo-{F}rance convective scale model {AROME}-{F}rance},
    journal={EGUsphere [preprint]},
    year={2025},
    doi={10.5194/egusphere-2025-2642},
    url={https://doi.org/10.5194/egusphere-2025-2642}
}

@article{PF,
    author = {van Leeuwen, Peter Jan and Künsch, Hans R. and Nerger, Lars and Potthast, Roland and Reich, Sebastian},
    title = {Particle filters for high-dimensional geoscience applications: A review},
    journal = {Quarterly Journal of the Royal Meteorological Society},
    volume = {145},
    number = {723},
    pages = {2335-2365},
    doi = {https://doi.org/10.1002/qj.3551},
    url = {https://rmets.onlinelibrary.wiley.com/doi/abs/10.1002/qj.3551},
    eprint = {https://rmets.onlinelibrary.wiley.com/doi/pdf/10.1002/qj.3551},
    year = {2019}
}

@article{FA_APF,
    author  = {Petetin, Yohan and Desbouvries, François},
    title   = {{O}ptimal {SIR} algorithm vs. fully adapted auxiliary particle filter: a non asymptotic analysis},
    journal = {Statistics and Computing},
    year    = {2013},
    volume  = {23},
    number  = {6},
    pages   = {759--775},
    doi     = {10.1007/s11222-012-9345-5},
    url     = {https://doi.org/10.1007/s11222-012-9345-5},
    issn    = {1573-1375}
}

@article {HighDimensional_PF,
    author = "Chris Snyder and Thomas Bengtsson and Peter Bickel and Jeff Anderson",
    title = "{O}bstacles to {H}igh-{D}imensional {P}article {F}iltering",
    journal = "Monthly Weather Review",
    year = "2008",
    publisher = "American Meteorological Society",
    address = "Boston MA, USA",
    volume = "136",
    number = "12",
    doi = "10.1175/2008MWR2529.1",
    pages=      "4629 - 4640",
    url = "https://journals.ametsoc.org/view/journals/mwre/136/12/2008mwr2529.1.xml"
}

@article{GenCast,
    author  = {Price, I. and Sanchez-Gonzalez, A. and Alet, F. and Andersson, T. R. and El-Kadi, A. and Masters, D. and Ewalds, T. and Stott, J. and Mohamed, S. and Battaglia, P. and Lam, R. and Willson, M.},
    title   = {Probabilistic weather forecasting with machine learning},
    journal = {Nature},
    year    = {2025},
    volume  = {637},
    number  = {8044},
    pages   = {84--90},
    month   = {jan},
    doi     = {10.1038/s41586-024-08252-9},
    note    = {Epub 2024 Dec 4; PMID: 39633054; PMCID: PMC11666454},
}

@article{Appa,
    title={{A}ppa: {B}ending {W}eather {D}ynamics with {L}atent {D}iffusion {M}odels for {G}lobal {D}ata {A}ssimilation},
    author={Gérôme Andry and Sacha Lewin and François Rozet and Omer Rochman and Victor Mangeleer and Matthias Pirlet and Elise Faulx and Marilaure Grégoire and Gilles Louppe},
    booktitle={Machine Learning and the Physical Sciences Workshop (NeurIPS)},
    year={2025},
    url={https://arxiv.org/abs/2504.18720},
}

@article{ERA5,
    author = {Hersbach, Hans and Bell, Bill and Berrisford, Paul and Hirahara, Shoji and Horányi, András and Muñoz-Sabater, Joaquín and Nicolas, Julien and Peubey, Carole and Radu, Raluca and Schepers, Dinand and Simmons, Adrian and Soci, Cornel and Abdalla, Saleh and Abellan, Xavier and Balsamo, Gianpaolo and Bechtold, Peter and Biavati, Gionata and Bidlot, Jean and Bonavita, Massimo and De Chiara, Giovanna and Dahlgren, Per and Dee, Dick and Diamantakis, Michail and Dragani, Rossana and Flemming, Johannes and Forbes, Richard and Fuentes, Manuel and Geer, Alan and Haimberger, Leo and Healy, Sean and Hogan, Robin J. and Hólm, Elías and Janisková, Marta and Keeley, Sarah and Laloyaux, Patrick and Lopez, Philippe and Lupu, Cristina and Radnoti, Gabor and de Rosnay, Patricia and Rozum, Iryna and Vamborg, Freja and Villaume, Sebastien and Thépaut, Jean-Noël},
    title = {The {ERA5} global reanalysis},
    journal = {Quarterly Journal of the Royal Meteorological Society},
    volume = {146},
    number = {730},
    pages = {1999-2049},
    doi = {https://doi.org/10.1002/qj.3803},
    url = {https://rmets.onlinelibrary.wiley.com/doi/abs/10.1002/qj.3803},
    eprint = {https://rmets.onlinelibrary.wiley.com/doi/pdf/10.1002/qj.3803},
    year = {2020}
}

@article{4D_VAR,
    author = {Le Dimet, François-Xavier and Talagrand, Olivier},
    title = {Variational algorithms for analysis and assimilation of meteorological observations: theoretical aspects},
    journal = {Tellus A},
    volume = {38A},
    number = {2},
    pages = {97-110},
    doi = {https://doi.org/10.1111/j.1600-0870.1986.tb00459.x},
    url = {https://onlinelibrary.wiley.com/doi/abs/10.1111/j.1600-0870.1986.tb00459.x},
    eprint = {https://onlinelibrary.wiley.com/doi/pdf/10.1111/j.1600-0870.1986.tb00459.x},
    year = {1986}
}

@incollection{pf_degeneracy,
    volume = {8964},
    editor = {Sai Ravela and Adrian Sandu},
    doi = {10.1007/978-3-319-25138-7},
    year = {2015},
    address = {Heidelberg},
    series = {Lecture notes in computer science},
    publisher = {Springer},
    title = {Aspects of particle filtering in high-dimensional spaces},
    pages = {251--262},
    booktitle = {Dynamic data-driven environmental systems science},
    isbn = {9783319251370},
    url = {https://centaur.reading.ac.uk/50238/},
    issn = {0302-9743},
    author = {Van Leeuwen, Peter}
}

@article {OptimalProposal,
    author = "Chris Snyder and Thomas Bengtsson and Mathias Morzfeld",
    title = "{P}erformance {B}ounds for {P}article {F}ilters {U}sing the {O}ptimal {P}roposal",
    journal = "Monthly Weather Review",
    year = "2015",
    publisher = "American Meteorological Society",
    address = "Boston MA, USA",
    volume = "143",
    number = "11",
    doi = "10.1175/MWR-D-15-0144.1",
    pages=      "4750 - 4761",
    url = "https://journals.ametsoc.org/view/journals/mwre/143/11/mwr-d-15-0144.1.xml"
}

@inproceedings{FlowDAS,
    title={Flow{DAS}: A {S}tochastic {I}nterpolant-based {F}ramework for {D}ata {A}ssimilation},
    author={Siyi Chen and Yixuan Jia and Qing Qu and He Sun and Jeffrey A Fessler},
    booktitle={The Thirty-ninth Annual Conference on Neural Information Processing Systems},
    year={2026},
    url={https://openreview.net/forum?id=1nWqhiulqD}
}

@article{LETKF,
    title = {Efficient data assimilation for spatiotemporal chaos: A local ensemble transform {K}alman filter},
    journal = {Physica D: Nonlinear Phenomena},
    volume = {230},
    number = {1},
    pages = {112-126},
    year = {2007},
    note = {Data Assimilation},
    issn = {0167-2789},
    doi = {https://doi.org/10.1016/j.physd.2006.11.008},
    url = {https://www.sciencedirect.com/science/article/pii/S0167278906004647},
    author = {Brian R. Hunt and Eric J. Kostelich and Istvan Szunyogh}
}

@misc{DAISI,
    title={{DAISI}: {D}ata {A}ssimilation with {I}nverse {S}ampling using {S}tochastic {I}nterpolants}, 
    author={Martin Andrae and Erik Larsson and So Takao and Tomas Landelius and Fredrik Lindsten},
    year={2025},
    eprint={2512.00252},
    archivePrefix={arXiv},
    primaryClass={stat.ML},
    url={https://arxiv.org/abs/2512.00252}, 
}

@inproceedings{MMPS,
    title={{L}earning {D}iffusion {P}riors from {O}bservations by {E}xpectation {M}aximization},
    author={Fran{\c{c}}ois Rozet and G{\'e}r{\^o}me Andry and Francois Lanusse and Gilles Louppe},
    booktitle={The Thirty-eighth Annual Conference on Neural Information Processing Systems},
    year={2024},
    url={https://openreview.net/forum?id=7v88Fh6iSM}
}

@book{Proba,
    address = {New York [u.a.]},
    author = {Billingsley, {Patrick}},
    edition = {3. ed},
    isbn = {0471007102},
    pagetotal = {XII, 593},
    ppn_gvk = {164761632},
    publisher = {Wiley},
    series = {A Wiley-Interscience publication},
    title = {Probability and {M}easure},
    url = {http://gso.gbv.de/DB=2.1/CMD?ACT=SRCHA&SRT=YOP&IKT=1016&TRM=ppn+164761632&sourceid=fbw_bibsonomy},
    year = 1995
}

@inproceedings{DDPM,
    author = {Ho, Jonathan and Jain, Ajay and Abbeel, Pieter},
    booktitle = {Advances in Neural Information Processing Systems},
    editor = {Larochelle, H. and Ranzato, M. and Hadsell, R. and Balcan, M.F. and Lin, H.},
    pages = {6840--6851},
    publisher = {Curran Associates, Inc.},
    title = {{D}enoising {D}iffusion {P}robabilistic {M}odels},
    volume = 33,
    year = 2020
}

@article{SDE,
    title = {Reverse-time diffusion equation models},
    journal = {Stochastic Processes and their Applications},
    volume = {12},
    number = {3},
    pages = {313-326},
    year = {1982},
    issn = {0304-4149},
    doi = {https://doi.org/10.1016/0304-4149(82)90051-5},
    url = {https://www.sciencedirect.com/science/article/pii/0304414982900515},
    author = {Brian D.O. Anderson}
}

@inproceedings{Exponential_solver,
    title={{F}ast {S}ampling of {D}iffusion {M}odels with {E}xponential {I}ntegrator},
    author={Qinsheng Zhang and Yongxin Chen},
    booktitle={The Eleventh International Conference on Learning Representations },
    year={2023},
    url={https://openreview.net/forum?id=Loek7hfb46P}
}

@inproceedings{DDIM,
    title={{D}enoising {D}iffusion {I}mplicit {M}odels},
    author={Jiaming Song and Chenlin Meng and Stefano Ermon},
    booktitle={International Conference on Learning Representations},
    year={2021},
    url={https://openreview.net/forum?id=St1giarCHLP}
}

@inproceedings{Filtering_GenCast,
  title={{T}raining-{F}ree {D}ata {A}ssimilation with {G}en{C}ast},
  author={Savary, Thomas and Rozet, François and Louppe, Gilles},
  booktitle={NeurIPS 2025 Workshop on Tackling Climate Change with Machine Learning},
  url={https://www.climatechange.ai/papers/neurips2025/39},
  year={2025}
}

@article{GMRES,
    address = {Philadelphia, PA, USA},
    author = {Saad, Youcef and Schultz, Martin H.},
    doi = {10.1137/0907058},
    issn = {0196-5204},
    journal = {SIAM Journal on Scientific and Statistical Computing},
    month = jul,
    number = 3,
    pages = {856--869},
    publisher = {Society for Industrial and Applied Mathematics},
    title = {{GMRES: A Generalized Minimal Residual Algorithm for Solving Nonsymmetric Linear Systems}},
    url = {http://dx.doi.org/10.1137/0907058},
    volume = 7,
    year = 1986
}

@article{BiCG,
    author = {van der Vorst, H. A.},
    title = {{B}i-{CGSTAB}: {A} {F}ast and {S}moothly {C}onverging {V}ariant of {B}i-{CG} for the {S}olution of {N}onsymmetric {L}inear {S}ystems},
    journal = {SIAM Journal on Scientific and Statistical Computing},
    volume = {13},
    number = {2},
    pages = {631-644},
    year = {1992},
    doi = {10.1137/0913035},
    URL = {https://doi.org/10.1137/0913035},
    eprint = {https://doi.org/10.1137/0913035}
}

@article{L63_Stochastic,
    title = {Stochastic climate dynamics: Random attractors and time-dependent invariant measures},
    journal = {Physica D: Nonlinear Phenomena},
    volume = {240},
    number = {21},
    pages = {1685-1700},
    year = {2011},
    issn = {0167-2789},
    doi = {https://doi.org/10.1016/j.physd.2011.06.005},
    url = {https://www.sciencedirect.com/science/article/pii/S016727891100145X},
    author = {Mickaël D. Chekroun and Eric Simonnet and Michael Ghil}
}

@article{Milstein,
    author = {Gelbrich, Matthias and R\"{o}misch, Werner},
    title = {{N}umerical {S}olution of {S}tochastic {D}ifferential {E}quations ({P}eter {E}. {K}loeden and {E}ckhard {P}laten)},
    journal = {SIAM Review},
    volume = {37},
    number = {2},
    pages = {272-275},
    year = {1995},
    doi = {10.1137/1037073},
    URL = {https://doi.org/10.1137/1037073},
    eprint = {https://doi.org/10.1137/1037073},
}

@inproceedings{resnet,
    author = {He, Kaiming and Zhang, Xiangyu and Ren, Shaoqing and Sun, Jian},
    booktitle = {Proceedings of 2016 IEEE Conference on Computer Vision and Pattern Recognition},
    doi = {10.1109/CVPR.2016.90},
    issn = {1063-6919},
    location = {Las Vegas, NV, USA},
    month = jun,
    pages = {770--778},
    publisher = {IEEE},
    series = {CVPR '16},
    title = {{Deep Residual Learning for Image Recognition}},
    url = {http://ieeexplore.ieee.org/document/7780459},
    year = 2016
}

@inproceedings{EDM,
    title={{E}lucidating the {D}esign {S}pace of {D}iffusion-{B}ased {G}enerative {M}odels},
    author={Tero Karras and Miika Aittala and Timo Aila and Samuli Laine},
    booktitle={Advances in Neural Information Processing Systems},
    editor={Alice H. Oh and Alekh Agarwal and Danielle Belgrave and Kyunghyun Cho},
    year={2022},
    url={https://openreview.net/forum?id=k7FuTOWMOc7}
}

@article{num_ens_members,
    author = {Leutbecher, Martin},
    title = {Ensemble size: How suboptimal is less than infinity?},
    journal = {Quarterly Journal of the Royal Meteorological Society},
    volume = {145},
    number = {S1},
    pages = {107-128},
    doi = {https://doi.org/10.1002/qj.3387},
    url = {https://rmets.onlinelibrary.wiley.com/doi/abs/10.1002/qj.3387},
    eprint = {https://rmets.onlinelibrary.wiley.com/doi/pdf/10.1002/qj.3387},
    year = {2019}
}

@misc{EnFF,
    title={{F}low {M}atching for {E}fficient and {S}calable {D}ata {A}ssimilation}, 
    author={Taos Transue and Bohan Chen and So Takao and Bao Wang},
    year={2025},
    eprint={2508.13313},
    archivePrefix={arXiv},
    primaryClass={stat.ML},
    url={https://arxiv.org/abs/2508.13313}, 
}

@inproceedings{Latent_EnSF,
    title={{L}atent-{E}n{SF}: {A} {L}atent {E}nsemble {S}core {F}ilter for {H}igh-{D}imensional {D}ata {A}ssimilation with {S}parse {O}bservation {D}ata},
    author={Phillip Si and Peng Chen},
    booktitle={The Thirteenth International Conference on Learning Representations},
    year={2025},
    url={https://openreview.net/forum?id=urcEYsZOBz}
}

@InProceedings{U-Net,
author="Ronneberger, Olaf
and Fischer, Philipp
and Brox, Thomas",
editor="Navab, Nassir
and Hornegger, Joachim
and Wells, William M.
and Frangi, Alejandro F.",
title="{U}-{N}et: {C}onvolutional {N}etworks for {B}iomedical {I}mage {S}egmentation",
booktitle="Medical Image Computing and Computer-Assisted Intervention -- MICCAI 2015",
year="2015",
publisher="Springer International Publishing",
address="Cham",
pages="234--241",
isbn="978-3-319-24574-4"
}

@article {Optimal_Proposal_Efficiency,
    author = "Laura Slivinski and Chris Snyder",
    title = "{E}xploring {P}ractical {E}stimates of the {E}nsemble {S}ize {N}ecessary for {P}article {F}ilters",
    journal = "Monthly Weather Review",
    year = "2016",
    publisher = "American Meteorological Society",
    address = "Boston MA, USA",
    volume = "144",
    number = "3",
    doi = "10.1175/MWR-D-14-00303.1",
    pages=      "861 - 875",
    url = "https://journals.ametsoc.org/view/journals/mwre/144/3/mwr-d-14-00303.1.xml"
}

@inproceedings{DiffusionLAM,
    title={{D}iffusion-{LAM}: {P}robabilistic {L}imited {A}rea {W}eather {F}orecasting with {D}iffusion},
    author={Larsson, Erik and Oskarsson, Joel and Landelius, Tomas and Lindsten, Fredrik},
    booktitle={ICLR 2025 Workshop on Tackling Climate Change with Machine Learning},
    url={https://www.climatechange.ai/papers/iclr2025/36},
    year={2025}
}

@article{Diffusion_Sea_Ice,
    author = {Finn, Tobias Sebastian and Durand, Charlotte and Farchi, Alban and Bocquet, Marc and Rampal, Pierre and Carrassi, Alberto},
    title = {{G}enerative {D}iffusion for {R}egional {S}urrogate {M}odels From {S}ea-{I}ce {S}imulations},
    journal = {Journal of Advances in Modeling Earth Systems},
    volume = {16},
    number = {10},
    pages = {e2024MS004395},
    doi = {https://doi.org/10.1029/2024MS004395},
    url = {https://agupubs.onlinelibrary.wiley.com/doi/abs/10.1029/2024MS004395},
    eprint = {https://agupubs.onlinelibrary.wiley.com/doi/pdf/10.1029/2024MS004395},
    note = {e2024MS004395 2024MS004395},
    year = {2024}
}

@article{IFS,
    title = {The {IFS} model: A parallel production weather code},
    journal = {Parallel Computing},
    volume = {21},
    number = {10},
    pages = {1621-1638},
    year = {1995},
    note = {Climate and weather modeling},
    issn = {0167-8191},
    doi = {https://doi.org/10.1016/0167-8191(96)80002-0},
    url = {https://www.sciencedirect.com/science/article/pii/0167819196800020},
    author = {S.R.M. Barros and D. Dent and L. Isaksen and G. Robinson and G. Mozdzynski and F. Wollenweber}
}

@article{Tweedie,
    ISSN = {01621459},
    URL = {http://www.jstor.org/stable/23239562},
    author = {Bradley Efron},
    journal = {Journal of the American Statistical Association},
    number = {496},
    pages = {1602--1614},
    publisher = {[American Statistical Association, Taylor & Francis, Ltd.]},
    title = {{T}weedie's {F}ormula and {S}election {B}ias},
    urldate = {2026-01-28},
    volume = {106},
    year = {2011}
}

@article{SSR,
    author="V.  Fortin and M.  Abaza and F.  Anctil and R.  Turcotte",
    title="{W}hy {S}hould {E}nsemble {S}pread {M}atch the {RMSE} of the {E}nsemble {M}ean?",
    journal="Journal of Hydrometeorology",
    year="2014",
    publisher="American Meteorological Society",
    address="Boston MA, USA",
    volume="15",
    number="4",
    doi="10.1175/JHM-D-14-0008.1",
    pages="1708 - 1713",
    url="https://journals.ametsoc.org/view/journals/hydr/15/4/jhm-d-14-0008_1.xml"
}

@article{Smoothing_Doucet,
    author={Doucet, Arnaud and Johansen, Adam},
    year={2009},
    month={01},
    pages={},
    title={{A} {T}utorial on {P}article {F}iltering and {S}moothing: {F}ifteen {Y}ears {L}ater},
    volume={12},
    journal={Handbook of Nonlinear Filtering}
}

@inproceedings{Particle_Smoothing,
    title="Fast particle smoothing: if {I} had a million particles",
    author="Klaas, Mike and Briers, Mark and de Freitas, Nando and Doucet, Arnaud and Maskell, Simon and Lang, Dustin",
    year="2006",
    address="New York, NY, USA",
    booktitle="International Conference on Machine Learning (ICML)",
    location="Pittsburgh, Pennsylvania",
    pages="481--488",
    publisher="ACM",
    url="http://doi.acm.org/10.1145/1143844.1143905",
    doi="10.1145/1143844.1143905",
}

@article{CRPS,
    author = {Tilmann Gneiting and Adrian E Raftery},
    title = {{S}trictly {P}roper {S}coring {R}ules, {P}rediction, and {E}stimation},
    journal = {Journal of the American Statistical Association},
    volume = {102},
    number = {477},
    pages = {359--378},
    year = {2007},
    publisher = {Taylor \& Francis},
    doi = {10.1198/016214506000001437},
    URL = {https://doi.org/10.1198/016214506000001437},
    eprint = {https://doi.org/10.1198/016214506000001437}
}

@inproceedings{DPS,
    title={{D}iffusion {P}osterior {S}ampling for {G}eneral {N}oisy {I}nverse {P}roblems},
    author={Hyungjin Chung and Jeongsol Kim and Michael Thompson Mccann and Marc Louis Klasky and Jong Chul Ye},
    booktitle={The Eleventh International Conference on Learning Representations },
    year={2023},
    url={https://openreview.net/forum?id=OnD9zGAGT0k},
}

@article{EnKG,
    title={{E}nsemble {K}alman {D}iffusion {G}uidance: {A} {D}erivative-free {M}ethod for {I}nverse {P}roblems},
    author={Hongkai Zheng and Wenda Chu and Austin Wang and Nikola Borislavov Kovachki and Ricardo Baptista and Yisong Yue},
    journal={Transactions on Machine Learning Research},
    issn={2835-8856},
    year={2025},
    url={https://openreview.net/forum?id=XPEEsKneKs},
    note={}
}

@InProceedings{TARP,
    title={{S}ampling-{B}ased {A}ccuracy {T}esting of {P}osterior {E}stimators for {G}eneral {I}nference},
    author={Lemos, Pablo and Coogan, Adam and Hezaveh, Yashar and Perreault-Levasseur, Laurence},
    booktitle={Proceedings of the 40th International Conference on Machine Learning},
    pages={19256--19273},
    year={2023},
    editor={Krause, Andreas and Brunskill, Emma and Cho, Kyunghyun and Engelhardt, Barbara and Sabato, Sivan and Scarlett, Jonathan},
    volume={202},
    series={Proceedings of Machine Learning Research},
    month={23--29 Jul},
    publisher={PMLR},
    url={https://proceedings.mlr.press/v202/lemos23a.html}
}

@misc{MIRA,
    title={{MIRA}: {A} {S}core for {C}onditional {D}istribution {A}ccuracy and {M}odel {C}omparison}, 
    author={Sammy Sharief and Justine Zeghal and Gabriel Missael Barco and Pablo Lemos and Yashar Hezaveh and Laurence Perreault-Levasseur},
    year={2026},
    eprint={2605.02014},
    archivePrefix={arXiv},
    primaryClass={stat.ML},
    url={https://arxiv.org/abs/2605.02014}, 
}
\bibliographystyle{icml2026}

\newpage
\appendix
\onecolumn

\section{Tweedie’s formulas} \label{appendix:Tweedie}
\begin{theorem}
Assuming that $p_{t}(x_{t}\mid x) = \mathcal{N}(x_{t} \mid \alpha_{t} x, \Sigma_{t})$, the first and second moments of $p_{t}(x \mid x_{t})$ are linked to the score function $\nabla_{\!x_{t}} \log p_{t}(x_{t}) $ used in Equation~\eqref{eq:reverse_diffusion_equation} through
\begin{align}
    &\mathbb{E}[x \mid x_{t}] = \alpha_{t}^{-1} \left [x_{t} + \Sigma_{t} \nabla_{\!x_{t}} \log p_{t}(x_{t})\right], \\
    &\mathbb{V}[x \mid x_{t}] = \alpha_{t}^{-2} \left [\Sigma_{t} + \Sigma_{t} \nabla_{\!x_{t}}^{2} \log p_{t}(x_{t}) \right].
\end{align}
\end{theorem}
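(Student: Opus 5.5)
We differentiate the marginal $p_t(x_t) = \int p_t(x_t \mid x)\, p(x)\, \mathrm{d}x$ under the integral sign, using the explicit Gaussian form of the perturbation kernel. The key identity is
\begin{equation*}
\nabla_{\!x_t} p_t(x_t \mid x) = -\Sigma_t^{-1}(x_t - \alpha_t x)\, p_t(x_t \mid x).
\end{equation*}
Integrating it against $p(x)$ and dividing by $p_t(x_t)$ gives, by Bayes' rule,
\begin{equation*}
\nabla_{\!x_t} \log p_t(x_t) = \Sigma_t^{-1}\bigl(\alpha_t \mathbb{E}[x \mid x_t] - x_t\bigr).
\end{equation*}
Multiplying by $\Sigma_t$ and solving for $\mathbb{E}[x \mid x_t]$ yields the first formula. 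Since $\alpha_t$ is a scalar, we may divide by it.

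For the second formula, we differentiate the first identity once more. I would write
\begin{equation*}
\mathbb{E}[x \mid x_t] = \int x\, \frac{p_t(x_t \mid x)\, p(x)}{p_t(x_t)}\, \mathrm{d}x
\end{equation*}
and compute its Jacobian with respect to $x_t$ via the quotient rule. Differentiating $p_t(x_t \mid x)$ gives the factor $\Sigma_t^{-1}(\alpha_t x - x_t)$. Differentiating $1/p_t(x_t)$ gives $-\nabla \log p_t(x_t)$, which by the first step equals $-\Sigma_t^{-1}(\alpha_t \mathbb{E}[x \mid x_t] - x_t)$. The $x_t$ terms cancel, leaving
\begin{equation*}
\nabla_{\!x_t}^{\top}\mathbb{E}[x \mid x_t] = \alpha_t\, \mathbb{E}\bigl[x (x - \mathbb{E}[x \mid x_t])^{\top} \mid x_t\bigr]\, \Sigma_t^{-1} = \alpha_t\, \mathbb{V}[x \mid x_t]\, \Sigma_t^{-1}.
\end{equation*}
Here we use that $\Sigma_t$ is symmetric, and in fact $\Sigma_t = \sigma_t^2 I$.

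Differentiating the first formula directly instead gives
\begin{equation*}
\nabla_{\!x_t}^{\top}\mathbb{E}[x \mid x_t] = \alpha_t^{-1}\bigl(I + \Sigma_t \nabla^2_{\!x_t} \log p_t(x_t)\bigr).
\end{equation*}
Equating the two expressions and right-multiplying by $\Sigma_t$ gives
\begin{equation*}
\mathbb{V}[x \mid x_t] = \alpha_t^{-2}\bigl(\Sigma_t + \Sigma_t \nabla^2 \log p_t(x_t)\, \Sigma_t\bigr).
\end{equation*}
The statement as written has a single $\Sigma_t$ in the Hessian term. The dimensionally correct form is $\Sigma_t \nabla^2 \log p_t\, \Sigma_t$, and I would prove that version and note the discrepancy. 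This computation also gives Equation~\eqref{eq:Tweedie_2nd_ordre} as a by-product: $\mathbb{V} = \alpha_t^{-1} \Sigma_t \nabla^{\top} \mathrm{m}$, using the symmetry of $\mathbb{V}$ and the fact that $\Sigma_t$ is isotropic, so it commutes with everything. The conditional version used in the paper, with an extra conditioning on $x^k$ or $y$, follows verbatim because the kernel does not depend on the conditioning.

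I expect the main obstacle to be not the algebra but the justification of differentiating under the integral. I would handle it by assuming $p$ has finite second moments. The Gaussian kernel and its first two $x_t$-derivatives are then dominated by an integrable envelope locally uniformly in $x_t$: it is a polynomial in $x - x_t$ times the Gaussian, and the Gaussian is bounded, so dominated convergence applies. Also, $p_t(x_t) > 0$ everywhere, so the logarithm is well defined. The remaining care is bookkeeping of transposes in the Jacobian.
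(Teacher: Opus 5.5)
Your proposal is correct and follows essentially the paper's route: first-order Tweedie by differentiating the marginal under the integral sign, then the Jacobian of the posterior mean via $\nabla_{\!x_t}\log p_t(x \mid x_t) = \alpha_t \Sigma_t^{-1}(x - \mathbb{E}[x \mid x_t])$, combined with the derivative of the first identity. The discrepancy you flag is genuine, and the paper's own proof confirms it: it ends at $\nabla^2_{\!x_t}\log p_t(x_t) = \alpha_t^2 \Sigma_t^{-1}\mathbb{V}[x \mid x_t]\Sigma_t^{-1} - \Sigma_t^{-1}$, which rearranges to your two-sided form $\alpha_t^{-2}(\Sigma_t + \Sigma_t \nabla^2_{\!x_t}\log p_t(x_t)\,\Sigma_t)$, so the single $\Sigma_t$ in the statement is a typo (with $\Sigma_t = \sigma_t^2 I$, its Hessian term is off by a factor $\sigma_t^2$).
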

We provide proofs of this theorem for completeness, even though it is a well known result \cite{Tweedie}.
\begin{proof}
    \setlength{\jot}{10pt}
    \begin{align*}
        \nabla_{\!x_{t}} \log p_{t}(x_{t}) &= \frac{1}{p_{t}(x_{t})} \int \nabla_{\!x_{t}} p_{t}(x_{t}, x) \mathrm{d}x \\
        &= \frac{1}{p_{t}(x_{t})} \int p_{t}(x_{t}, x) \nabla_{\!x_{t}} \log p_{t}(x_{t}, x) \mathrm{d}x \\
        &= \int p_{t}(x \mid x_{t}) \nabla_{\!x_{t}} \log p_{t}(x_{t} \mid x) \text{d}x \\
        &= \int p_{t}(x \mid x_{t}) \Sigma_{t}^{-1} \left( \alpha_{t} x - x_{t} \right) \mathrm{d}x \\
        &= \Sigma_{t}^{-1}\left( \alpha_{t} \mathbb{E}[x \mid x_{t}]  - x_{t} \right)
    \end{align*}
\end{proof}

\begin{proof}
    \setlength{\jot}{10pt}
    \begin{align*}
        \nabla_{\!x_{t}}^{2} \log p_{t}(x_{t}) &= \nabla_{\!x_{t}} \left [ \nabla_{\!x_{t}}^{\top} \log p_{t}(x_{t})\right] \\
        &= \alpha_{t} \nabla_{\!x_{t}}^{\top} \mathbb{E}[x \mid x_{t}]\Sigma_{t}^{-1} - \Sigma_{t}^{-1}\\
        &= \alpha_{t} \left [ \int \nabla_{\!x_{t}}p_{t}(x \mid x_{t})x^{\top} \mathrm{d}x \right] \Sigma_{t}^{-1} - \Sigma_{t}^{-1}\\
        &= \alpha_{t} \left [ \int p_{t}(x \mid x_{t}) \nabla_{\!x_{t}} \log p_{t}(x \mid x_{t})x^{\top} \mathrm{d}x \right] \Sigma_{t}^{-1} - \Sigma_{t}^{-1}\\
        &= \alpha_{t} \left [ \int p_{t}(x \mid x_{t}) \alpha_{t} \Sigma_{t}^{-1}(x - \mathbb{E}[x \mid x_{t}])x^{\top} \mathrm{d}x \right] \Sigma_{t}^{-1} - \Sigma_{t}^{-1}\\
        &= \alpha_{t}^{2} \Sigma_{t}^{-1} \left( \mathbb{E}[xx^{\top} \mid x_{t}] - \mathbb{E}[x \mid x_{t}] \mathbb{E}[x \mid x_{t}]^{\top}\right)\Sigma_{t}^{-1} - \Sigma_{t}^{-1} \\
        &= \alpha_{t}^{2} \Sigma_{t}^{-1} \mathbb{V}[x \mid x_{t}]\Sigma_{t}^{-1} - \Sigma_{t}^{-1} \\
    \end{align*}
\end{proof}

\newpage
\section{Training-free ensemble methods} \label{appendix:DA_algo}
In the Lorenz'63 experiment, we compared our method with four other training-free ensemble algorithms. These methods approximate the Bayesian filtering distribution at each assimilation step $k$ using an ensemble of $N$ particles, relying only on the transition distribution $p(x^{k+1} \mid x^{k})$ and the observation operator $\mathcal{H}$.

\subsection{Bootstrap Particle Filter (BPF)\cite{PF}}
The Bootstrap Particle Filter (BPF, Algorithm \ref{algo:BPF}) is the simplest particle filter that samples new particles from the transition distribution $p(x^{k+1} \mid x^{k})$. In theory, it converges to the Bayesian filtering distribution, even with nonlinear dynamics and observation operators, but requires a large number of particles to perform well.

\begin{algorithm}    
    \caption{Bootstrap Particle Filter (BPF)}
    \begin{algorithmic}[1]
        \STATE {\bfseries Inputs:} $p(x^{0})$, $N$, $K$, $N_{\mathrm{thr}}$
        \STATE $x^{0}_{i} \sim p(x^{0})$
        \STATE $w^{0}_{i} \gets 1/N$
        \FOR{$k=0$ {\bfseries to} $K-1$}
            \STATE $x^{k+1}_{i} \sim p(x^{k+1} \mid x^{k}_{i})$
            \STATE $\hat{w}^{k+1}_{i} \gets p(y^{k+1} \mid x^{k+1}_{i})$
            \STATE $w^{k+1}_{i} \gets \hat{w}^{k+1}_{i} / \sum_{j=1}^{N} \hat{w}^{k+1}_{j}$
            \STATE $N_{\text{eff}} \gets 1 / \sum_{i=1}^{N}(w^{k+1}_{i})^{2}$
            \IF{$N_{\mathrm{eff}} < N_{\mathrm{thr}}$}
                \STATE do resampling
            \ENDIF
        \ENDFOR
        \STATE {\bfseries Return} $\mu^{k}_{x} =\sum_{i = 1}^{N} w^{k}_{i} \delta_{x^{k}_{i}}$ for all $k \in [1,K]$
    \end{algorithmic}
    \label{algo:BPF}
\end{algorithm}

\subsection{Ensemble Kalman Filter (EnKF) \cite{EnKF_Evensen}}
The Ensemble Kalman Filter (EnKF, Algorithm \ref{algo:EnKF}) is a popular ensemble-based method for Bayesian filtering. Although it assumes near-Gaussian distributions, EnKF performs well in practice and its variants are widely used in high-dimensional applications.

\begin{algorithm}    
    \caption{Ensemble Kalman Filter (EnKF)}
    \begin{algorithmic}[1]
        \STATE {\bfseries Inputs:} $p(x^{0})$, $N$, $K$
        \STATE $x^{0}_{i} \sim p(x^{0})$
        \FOR{$k=0$ {\bfseries to} $K-1$}
            \STATE $x^{f}_{i} \sim p(x^{k+1} \mid x^{k}_{i})$
            \STATE $h^{f}_{i} \gets \mathcal{H}(x^{f}_{i})$
            \STATE $\bar{x}^{f} \gets \frac{1}{N} \sum_{i=1}^{N} x^{f}_{i}$
            \STATE $\bar{h}^{f} \gets \frac{1}{N} \sum_{i=1}^{N} h^{f}_{i}$
            \STATE $P_{yy} \gets \frac{1}{N-1} \sum_{i=1}^{N} (h^{f}_{i} - \bar{h}^{f})(h^{f}_{i} - \bar{h}^{f})^T$
            \STATE $P_{xy} \gets \frac{1}{N-1} \sum_{i=1}^{N} (x^{f}_{i} - \bar{x}^{f})(h^{f}_{i} - \bar{h}^{f})^T$
            \STATE $G \gets P_{xy} (P_{yy} + \Sigma_{y})^{-1}$
            \FOR{$i=1$ {\bfseries to} $N$}
                \STATE $\epsilon_{i} \sim \mathcal{N}(0, \Sigma_{y})$
                \STATE $d_{i} \gets y^{k+1} + \epsilon_{i}$
                \STATE $x^{k+1}_{i} \gets x^{f}_{i} + G (d_{i} - h^{f}_{i})$
            \ENDFOR
        \ENDFOR
        \STATE {\bfseries Return} $\mu^{k}_{x} =\frac{1}{N} \sum_{i = 1}^{N}\delta_{x^{k}_{i}}$ for all $k \in [1,K]$
    \end{algorithmic}
    \label{algo:EnKF}
\end{algorithm}

\subsection{Ensemble Score Filter (EnSF) \cite{EnSF}}
The Ensemble Score Filter (EnSF, Algorithm \ref{algo:EnSF}) is a recent Bayesian filtering algorithm inspired by diffusion models. The method consists in estimating the score by Monte Carlo and then updating it using the gradient of the likelihood.

\begin{algorithm}    
    \caption{Ensemble Score Filter (EnSF)}
    \begin{algorithmic}[1]
        \STATE {\bfseries Inputs:} $p(x^{0})$, $N$, $\Delta_{t}$, $K$, $\alpha(\cdot)$, $\beta(\cdot)$, $b(\cdot)$, $\sigma(\cdot)$
        \STATE $x^{0}_{i} \sim p(x^{0})$
        \FOR{$k=0$ {\bfseries to} $K-1$}
            \STATE $x^{f}_{i} \sim p(x^{k+1} \mid x^{k}_{i})$
            \FOR{$i=1$ {\bfseries to} $N$}
                \STATE $z_{i} \sim \mathcal{N}(0, \beta(1)^{2}I)$
                \FOR{$t$ in $[1, 1 - \Delta_{t}, \cdots, \Delta_{t}]$}
                    \STATE $w_{j} \gets \mathcal{N}(z_{i} \mid \alpha(t)x^{f}_{j}, \beta_{t}^{2}I) / \sum_{k=1}^{N} \left [ \mathcal{N}(z_{i} \mid \alpha(t)x^{f}_{k}, \beta_{t}^{2}I) \right]$
                    \STATE $s_{x} \gets - \sum_{j=1}^{N}\left( \frac{z_{i} - \alpha(t)x^{f}_{j}}{\beta(t)^{2}}\right) w_{j}$
                    \STATE  $s_{x,y} \gets s_{x} + (1-t) \times \nabla_{\!z_{i}} \log p(y^{k+1} \mid z_{i})$
                    \STATE $z_{i} \gets z_{i} - \left [ b(t)z_{i} - \sigma(t)^{2}s_{x,y}\right]\Delta_{t} - \sigma(t)\sqrt{\Delta_{t}} \varepsilon,~\varepsilon \sim \mathcal{N}(0,I)$
                \ENDFOR
                \STATE $x^{k+1}_{i} \gets z_{i}$
            \ENDFOR
        \ENDFOR
        \STATE {\bfseries Return} $\mu^{k}_{x} = \frac{1}{N}\sum_{i = 1}^{N}\delta_{x^{k}_{i}}$ for all $k \in [1,K]$
    \end{algorithmic}
    \label{algo:EnSF}
\end{algorithm}

\subsection{Ensemble Flow Filter (EnFF) \cite{EnFF}}
The Ensemble Flow Filter (EnFF, Algorithm \ref{algo:EnFF}) is a flow-matching-based variant of the EnSF algorithm presented above. The idea is to take advantage of the flow matching framework to construct straighter paths between two successive filtering distributions, leading to more efficient sampling.
\begin{algorithm}    
    \caption{Ensemble Flow Filter (EnFF)}
    \begin{algorithmic}[1]
        \STATE {\bfseries Inputs:} $p(x^{0})$, $N$, $K$, $\Delta_{t}$, $\lambda$, $\sigma_{\mathrm{min}}$ 
        \STATE $x^{0}_{i} \sim p(x^{0})$
        \FOR{$k=0$ {\bfseries to} $K-1$}
            \STATE $x^{f}_{i} \sim p(x^{k+1} \mid x^{k}_{i})$
            \FOR{$i=1$ {\bfseries to} $N$}
                \STATE $z_{i} \gets x^{k}_{i}$
                \FOR{$t$ in $[0, \Delta_{t}, \cdots, 1 -\Delta_{t}]$}
                    \STATE $w_{j} \gets \mathcal{N}(z_{i} \mid tx^{f}_{j} + (1-t)x^{k}_{j}, \sigma_{\mathrm{min}}^{2}I) / \sum_{k=1}^{N} \left [ \mathcal{N}(z_{i} \mid tx^{f}_{k} + (1-t)x^{k}_{k}, \sigma_{\mathrm{min}}^{2}I) \right]$
                    \STATE $u \gets \sum_{j=1}^{N}\left(x^{f}_{j}  - x^{k}_{j}\right) w_{j}$
                    \STATE $\hat{z}_{i} \gets z_{i} + (1-t) \times u$
                    \STATE  $\tilde{u} \gets u - \lambda \times \nabla_{\!\hat{z}_{i}} \log p(y^{k+1} \mid \hat{z}_{i})$
                    \STATE $z_{i} \gets z_{i} + \Delta_{t} \times \tilde{u}$
                \ENDFOR
                \STATE $x^{k+1}_{i} \gets z_{i}$
            \ENDFOR
        \ENDFOR
        \STATE {\bfseries Return} $\mu^{k}_{x} = \frac{1}{N}\sum_{i = 1}^{N}\delta_{x^{k}_{i}}$ for all $k \in [1,K]$
    \end{algorithmic}
    \label{algo:EnFF}
\end{algorithm}

\newpage 
\section{Metrics} \label{appendix:metrics}
\subsection{Skill}
The skill of an ensemble of $N$ particles $\left \{x_{i}^{k} \right \}_{1 \leq i \leq N}$ at time $k$ is defined as the RMSE of the ensemble mean
\begin{equation}
    \mathrm{Skill} = \sqrt{\left \langle \left( u^{k} - \frac{1}{N}\sum_{i=1}^{N} x_{i}^{k} \right) ^{2}\right \rangle}
\end{equation}
where $\langle \cdot \rangle$ denotes the spatial mean operator and $u^{k}$ the ground truth at step $k$. In Section \ref{subsection: Lorenz63} and \ref{subsection: NS}, we compute the skill for each assimilation steps and then compute the average skill over the experiment.

\subsection{Spread}
The spread of an ensemble of $N$ particles $\left \{x_{i}^{k} \right \}_{1 \leq i \leq N}$ at time $k$ is defined as the ensemble standard deviation
\begin{equation}
    \mathrm{Spread} = \sqrt{\left \langle \frac{1}{N-1}\sum_{i=1}^{N} \left(x_{i}^{k} - \frac{1}{N}\sum_{j=1}^{N} x_{j}^{k}\right)^{2} \right \rangle}.
\end{equation}

\subsection{Spread-to-skill ratio (SSR)}
As shown by \citet{SSR}, a well-calibrated forecast should have a spread-to-skill ratio of 1, which is a necessary but not sufficient condition. Ratios below one indicate overconfident estimates, whereas ratios above one indicate underconfident estimates. In Section \ref{subsection: Lorenz63}, we compute the SSR for each assimilation steps and then compute the average SSR over the experiment.

\subsection{Continuous ranked probability score (CRPS)}
The CRPS score \cite{CRPS} of an assimilation experiment is defined as
\begin{equation}
    \mathrm{CRPS} = \frac{1}{K} \sum_{k=1}^{K} \left(\frac{1}{N} \sum_{i=1}^{N} \lVert u^{k} - x^{k}_{i} \rVert_{L_{1}} - \frac{1}{2N(N-1)}\sum_{i=1}^{N} \sum_{j=1}^{N} \lVert x^{k}_{i} - x^{k}_{j} \rVert_{L_{1}}\right) 
\end{equation}
where $K$ corresponds to the number of assimilation steps, $N$ to the number of particles, $\left \{x_{i}^{k} \right \}_{1 \leq i \leq N}$ to the ensemble of particles at time $k$, and $u^{k}$ to the ground truth at time $k$. The first term penalizes the average divergence from the ground truth while the second term encourages spread. Therefore, the CRPS is lowest when the distribution of the ensemble matches the ground-truth distribution.

\newpage
\section{Additional results} \label{appendix:additional_results}

\subsection{Incompressible Navier-Stokes Flow} \label{appendix:additional_results_NS}
    \begin{figure}[h!]
      \begin{center}
        \centerline{\includegraphics[width=0.8\columnwidth]{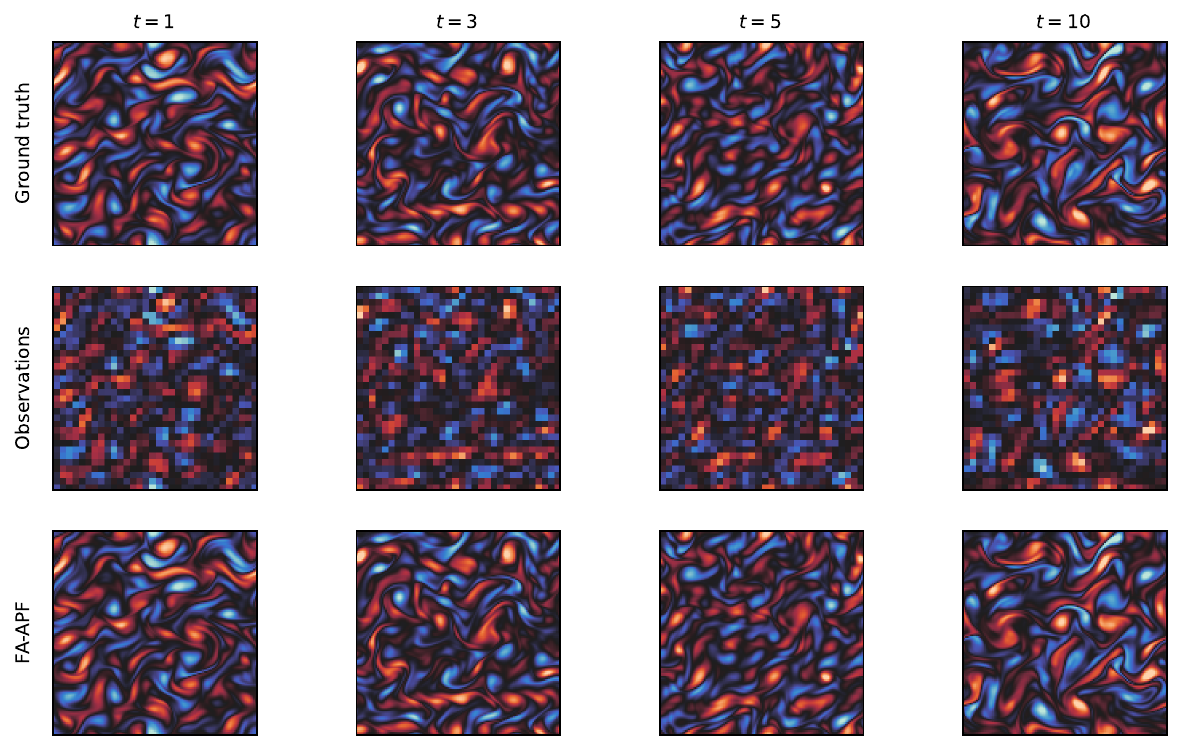}}
        \caption{
          Ground truth, $32 \times 32$ coarse observation, and FA-APF ensemble mean at different time steps during a filtering experiment.
        }
        \label{fig:appendice_ns_32}
      \end{center}
    \end{figure}
    \begin{figure}[b!]
      \begin{center}
        \centerline{\includegraphics[width=0.8\columnwidth]{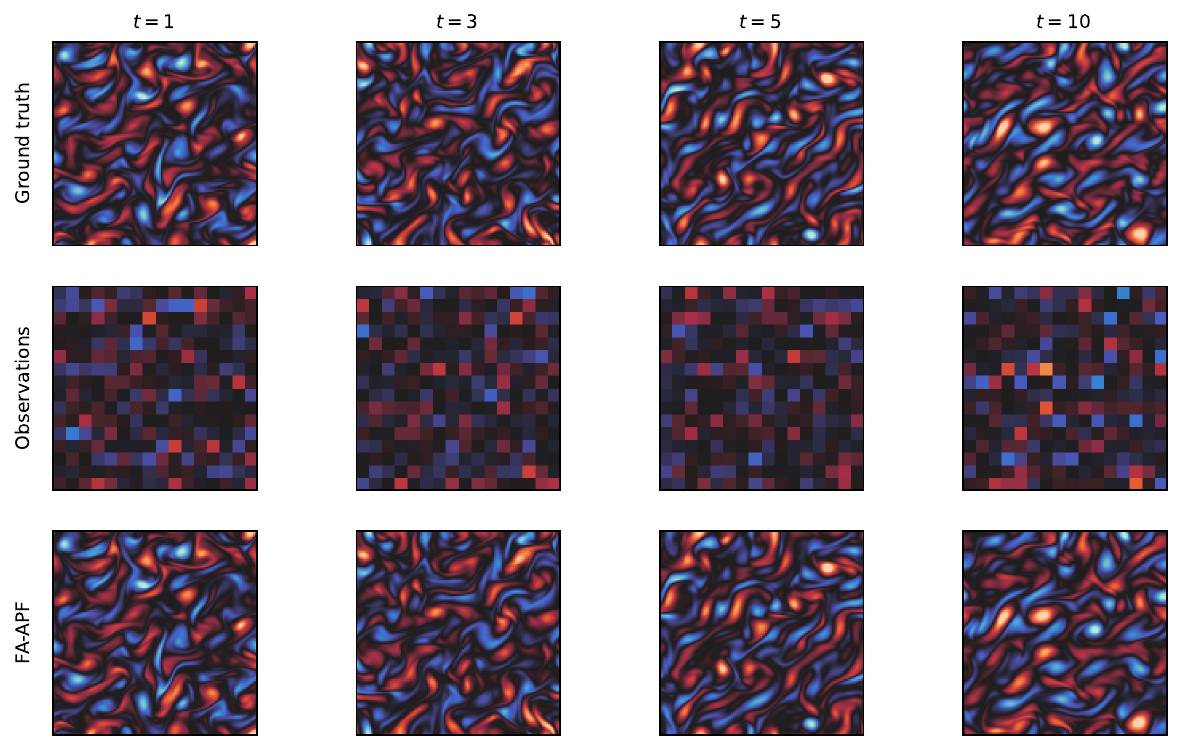}}
        \caption{
          Ground truth, $16 \times 16$ coarse observation, and FA-APF ensemble mean at different time steps during a filtering experiment.
        }
        \label{fig:appendice_ns_16}
      \end{center}
    \end{figure}

\newpage
\subsection{Medium-range weather forecasts (GenCast)} \label{appendix:additional_results_GenCast}
\subsubsection{Metrics for all variables}
    \thispagestyle{empty}
    \vspace*{\fill}
    \begin{center}
        \includegraphics[width=\columnwidth]{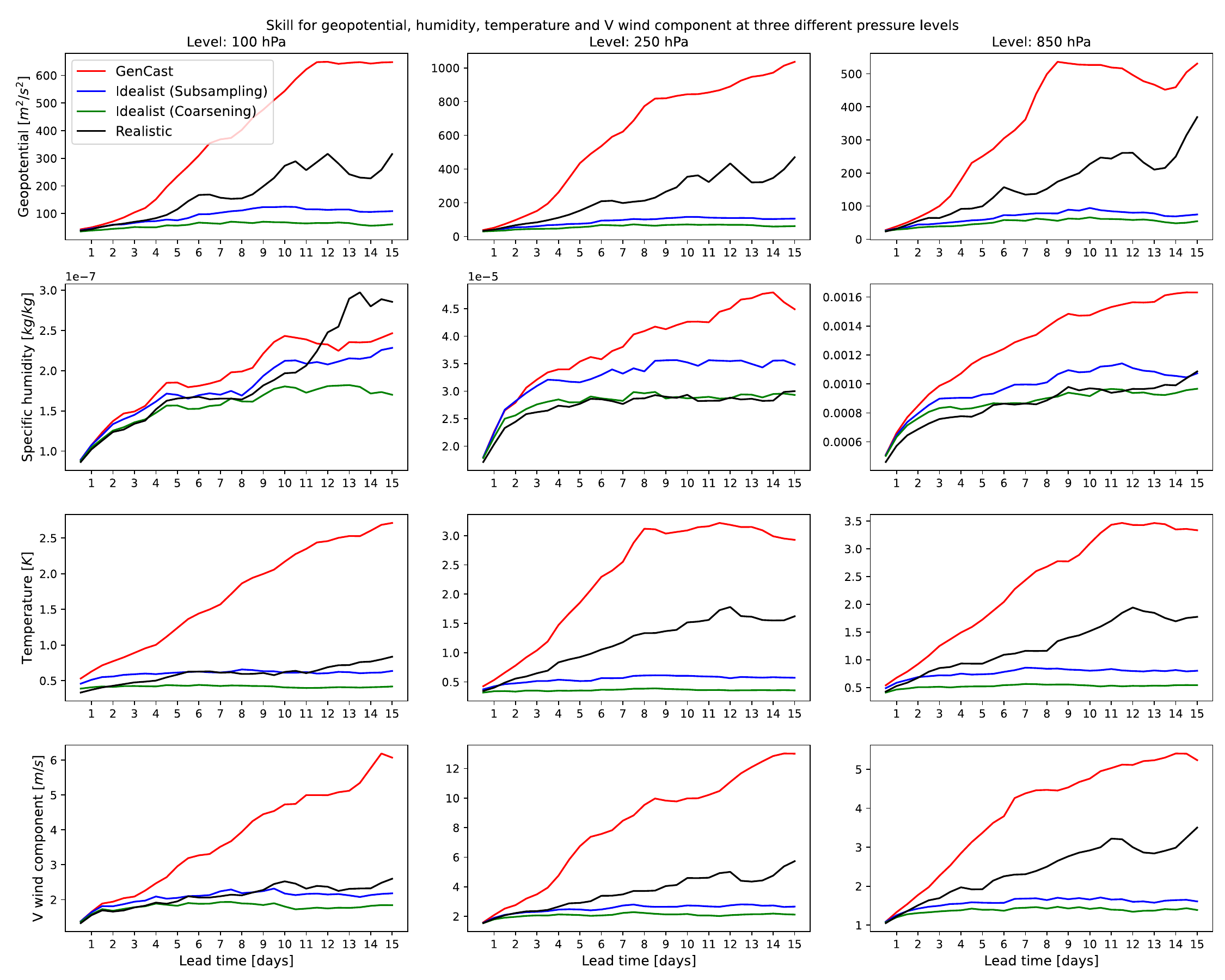}
        \captionof{figure}{Skill for temperature, geopotential, V component of wind and specific humidity at three different pressure levels (100, 250 and 850 hPa). For experiments with sparse temperature observations (blue and green curves), the skill reaches a plateau after a certain number of time steps for all variables (even those that are not observed), well below the one of GenCast’s forecasts.}
        \label{fig:appendice_gencast_all_skill}
    \end{center}
    \vspace*{\fill}
    \clearpage
    
    \newpage
    \thispagestyle{empty}
    \vspace*{\fill}
    \begin{center}
        \includegraphics[width=\columnwidth]{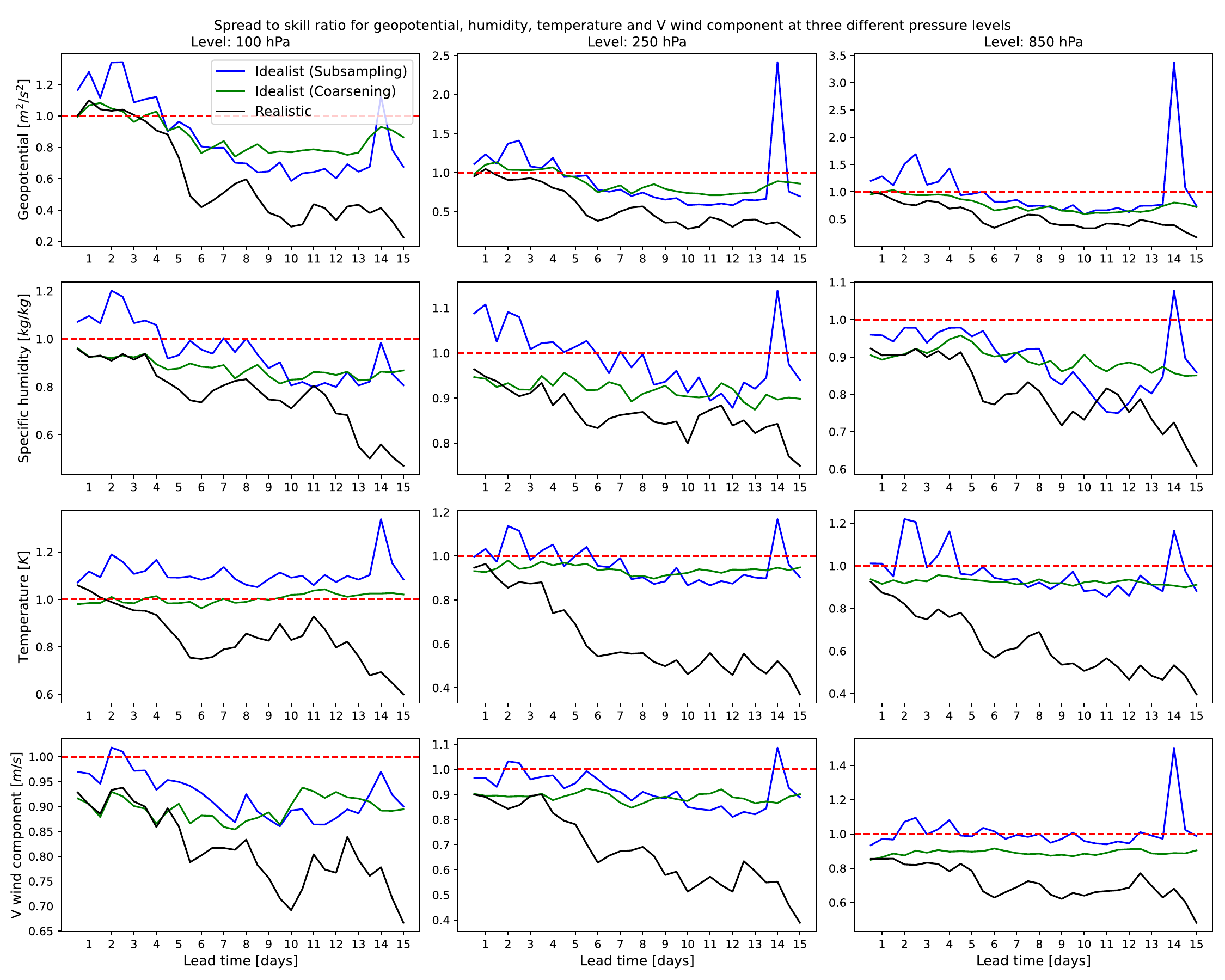}
        \captionof{figure}{Spread-to-skill ratio for temperature, geopotential, V component of wind and specific humidity at three different pressure levels (100, 250 and 850 hPa). For experiments with sparse temperature observations (blue and green curves), the ratio is close to 1, indicating that ensembles are well calibrated.}
        \label{fig:appendice_gencast_all_ssr}
    \end{center}
    \vspace*{\fill}
    \clearpage

\newpage
\subsubsection{Visualization of trajectories}
    \vspace{1cm}
    \begin{figure}[h!]
      \begin{center}
        \centerline{\includegraphics[width=1\columnwidth]{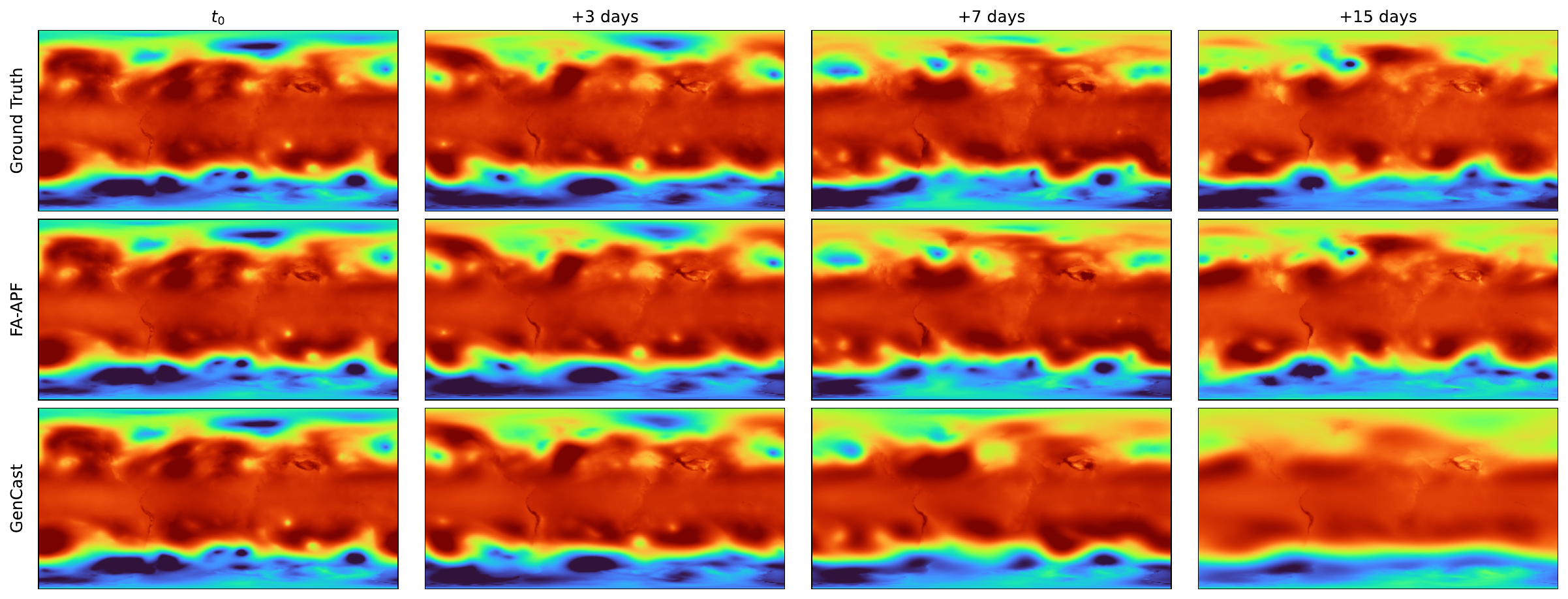}}
        \caption{
          Comparison of the geopotential at 500 hPa between the reference ERA5 trajectory (first row), the FA-APF ensemble mean with realistic observations (second row), and the GenCast ensemble mean (third row) after 3, 7, and 15 days. The ensemble mean of FA-APF remains qualitatively close to the ground truth, even under difficult observation conditions.
        }
        \label{fig:appendice_gencast_geopotential}
      \end{center}
    \end{figure}
    
    \vspace{-1cm}
    \begin{figure}[b!]
      \begin{center}
        \centerline{\includegraphics[width=1\columnwidth]{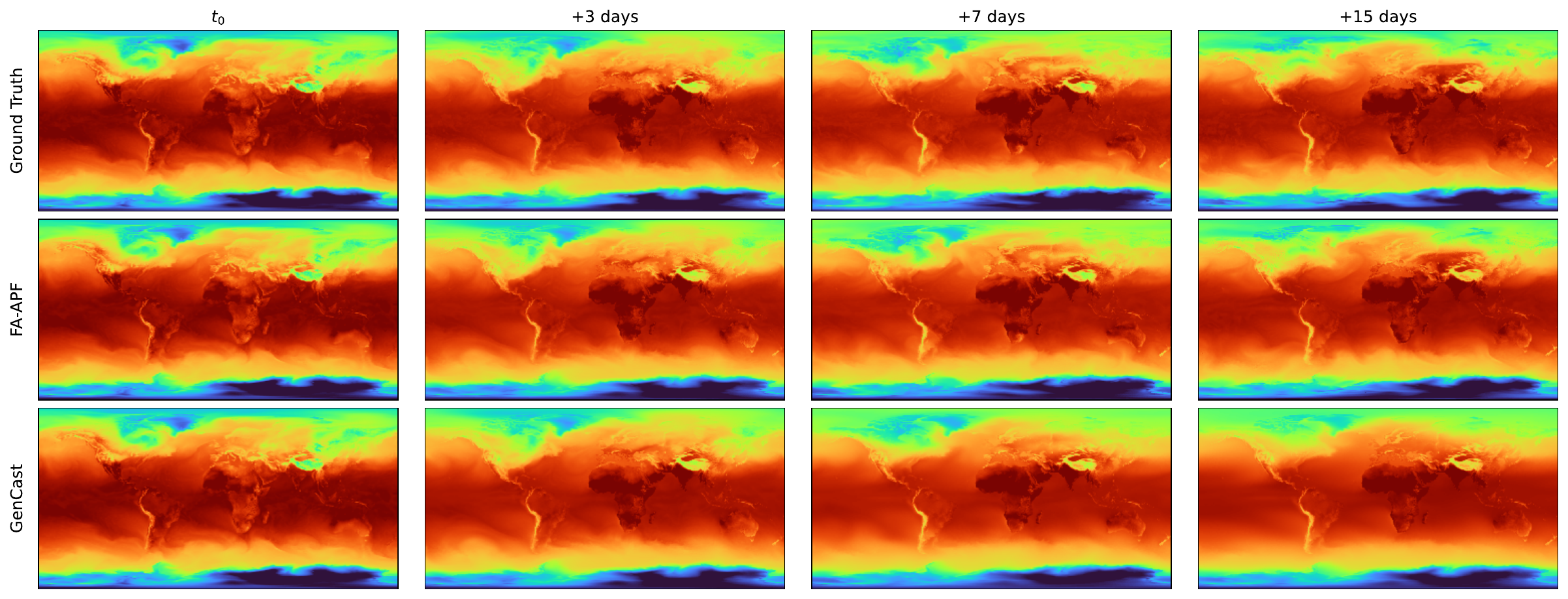}}
        \caption{
          Comparison of surface temperature between the reference ERA5 trajectory (first row), the FA-APF ensemble mean with realistic observations (second row), and the GenCast ensemble mean (third row) after 3, 7, and 15 days. The ensemble mean of FA-APF remains qualitatively close to the ground truth, even under difficult observation conditions.
        }
        \label{fig:appendice_gencast_t2m}
      \end{center}
    \end{figure}

\newpage
\subsubsection{Posterior Predictive Check (PPC)}
    \begin{figure}[h!]
      \begin{center}
        \centerline{\includegraphics[width=0.75\columnwidth]{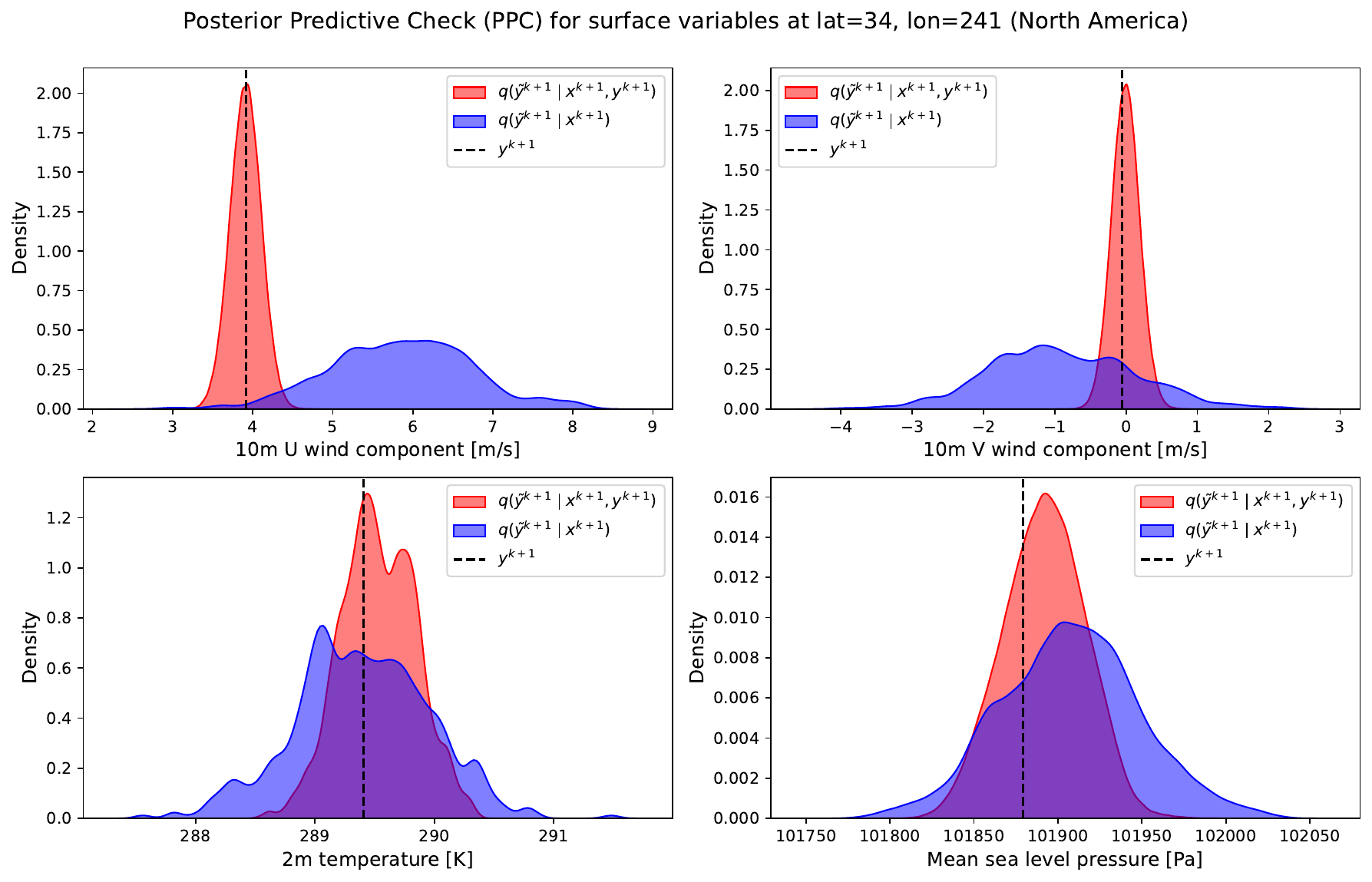}}
        \caption{
          Comparison between the distributions of conditional samples (red curve, generated using the optimal proposal) and unconditional samples (blue curve, generated with GenCast without conditioning) at an observed grid point (in North America). Observations (black dotted lines) are more likely in the distribution of conditional samples.
        }
        \label{fig:appendice_gencast_ppc_america}
      \end{center}
    \end{figure}
    
    \begin{figure}[b!]
      \begin{center}
        \centerline{\includegraphics[width=0.75\columnwidth]{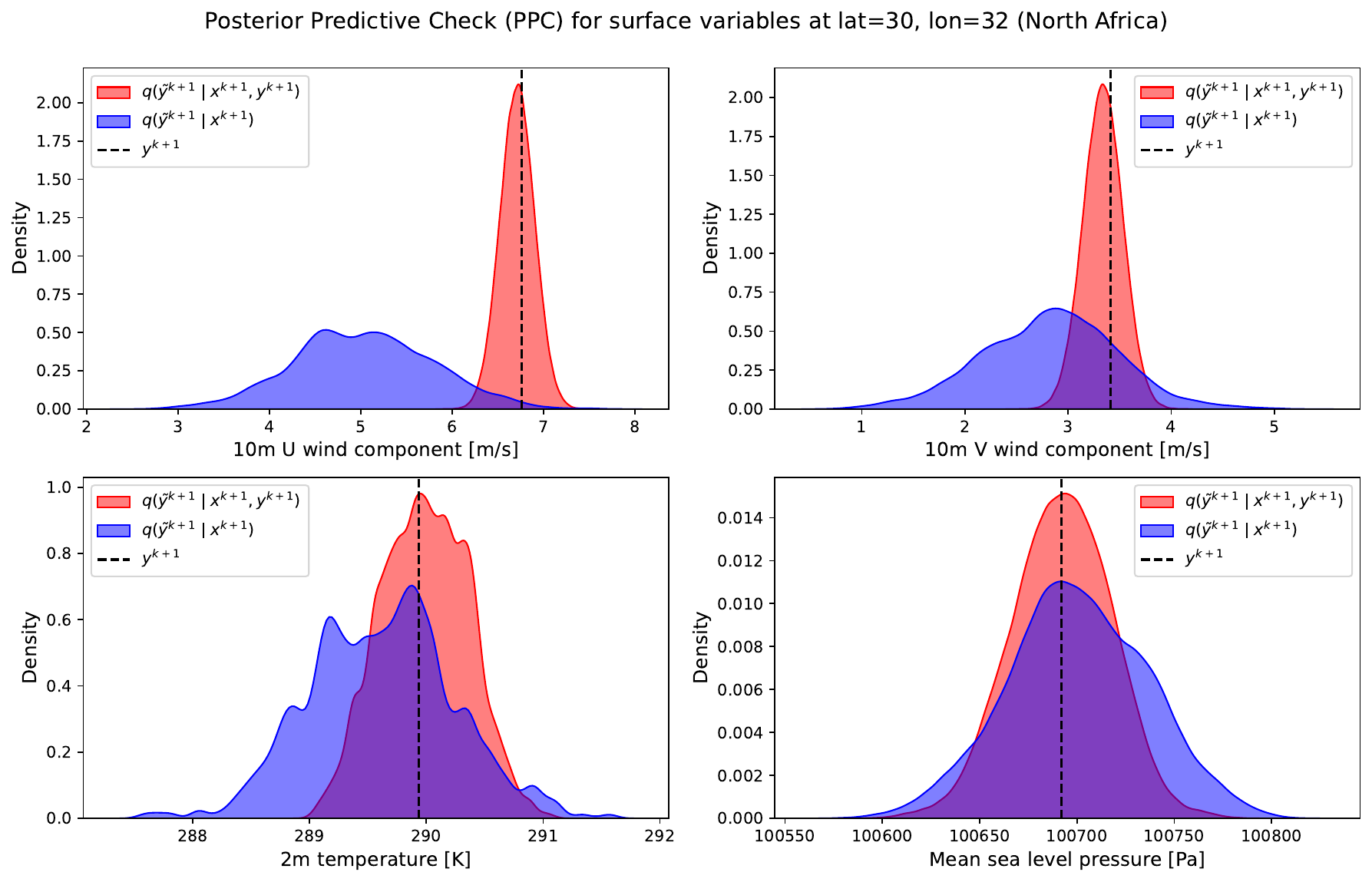}}
        \caption{
          Comparison between the distributions of conditional samples (red curve, generated using the optimal proposal) and unconditional samples (blue curve, generated with GenCast without conditioning) at an unobserved grid point (in North Africa). Observations (black dotted lines) are more likely in the distribution of conditional samples.
        }
        \label{fig:appendice_gencast_ppc_africa}
      \end{center}
    \end{figure}

\newpage
\section{Extension to stochastic interpolants} \label{appendix:stochastic interpolants}

\subsection{Stochastic interpolants for probabilistic forecasting} \label{Subsection:interpolants}
    Stochastic interpolants \cite{Interpolants} are a class of generative models that generalize diffusion \cite{diffusion} and flow matching \cite{FlowMatching}. They are designed to learn a transport between an easily sampled distribution $\rho_{0}$ and a target distribution $\rho_{1}$. In their linear form, commonly used in practice, they are defined by
    \begin{equation} \label{eq:linear_interpolant}
         x_{t} = \alpha_{t} x_{0} + \beta_{t}x_{1} + \gamma_{t}z, ~ t \in [0,1],
    \end{equation}
    where $(x_{0}, x_{1})$ is a data pair drawn from a joint measure $\nu(\mathrm{d}x_{0}, \mathrm{d}x_{1})$ with marginals $\rho_{0}(\mathrm{d}x_{0})$ and $\rho_{1}(\mathrm{d}x_{1})$,  $z \sim \mathcal{N}(0,I)$ with $(x_{0}, x_{1}) \perp z$, and $\alpha, \beta, \gamma $ continuous functions on $[0,1]$ that satisfy the following boundary conditions
    \begin{equation} \label{eq:boundary_conditions}
         \alpha_{0} = \beta_{1} = 1; \alpha_{1} = \beta_{0} = 0; \gamma_{0} = \gamma_{1} = 0.
    \end{equation}
    
     As shown by \citeauthor{probabilistic_forecasting_interpolants}, stochastic interpolants can be adapted for probabilistic forecasting. Given a dataset of successive states $\{(x^{k}, x^{k+1})_{i}\}_{i \in \mathcal{I}}$ from a dynamical system, one may set $x_1 = x^{k+1}$ and train a neural network $b_{\theta}$ to learn the velocity given $x^{k}$. The network is trained by minimizing
    \begin{equation} \label{eq:loss_interpolant}
         \mathcal{L}(\theta) = \int_{0}^{1}  \mathbb{E} \left [ \left \lVert \dot{x}_{t} - b_{\theta}(t, x_{t}, x^{k}) \right \rVert_{2}^{2} \right ] \mathrm{d}t,
    \end{equation}
    whose theoretical minimizer is $b_{t}(x_{t}, x^{k}) = \mathbb{E}[\dot{x}_{t} \mid x_{t}, x^{k}]$. Sampling from the transition law $p(x^{k+1} \mid x^{k})$ is then performed by solving the forward generative equation
    \begin{align} \label{eq:prior_forward_drift}
        &dx_{t} = b_{t}^{F}(x_{t}, x^{k})dt + \sqrt{2\varepsilon(t)}dw_{t}, \\
        &b_{t}^{F}(x_{t}, x^{k}) = b_{t}(x_{t}, x^{k}) + \varepsilon(t) s^{x}_{t}(x_{t}, x^{k}),
    \end{align}
    from $x_{0} \sim \rho_{0}$, where $s^{x}_{t}(x_{t}, x^{k})$ denotes the score of the conditional density of $x_{t}$ given $x^{k}$, and $\varepsilon(t) \geq 0$ is a diffusion coefficient. The score $s^{x}_{t}(x_{t}, x^{k})$ is not known a priori but can be expressed in terms of the velocity. For instance, when $x_{0} = x^{k}$ the relation between score and velocity reads
    \begin{equation} \label{eq:prior_score}
         s^{x}_{t}(x_{t}, x^{k}) = \frac{(\alpha_{t} \dot{\beta}_{t} - \dot{\alpha}_{t}\beta_{t})x^{k} - \dot{\beta}_{t}x_{t} + \beta_{t}b_{t}(x_{t}, x^{k})}{\gamma_{t} (\dot{\beta}_{t} \gamma_{t} - \beta_{t} \dot{\gamma}_{t})}.
    \end{equation}
    Using Equations \eqref{eq:prior_forward_drift} and \eqref{eq:prior_score}, we can then generate probable future states $x^{k+1}$ from a current state $x^{k}$, and thus emulate the system dynamics in an autoregressive manner.

\subsection{Sampling from the optimal proposal with stochastic interpolants}
    To apply the FA-APF, we must sample from the optimal proposal distribution $p(x^{k+1} \mid x^{k}, y^{k+1})$, which is generally infeasible for standard simulators. However, as shown by \citeauthor{FlowDAS} and \citeauthor{DAISI}, this distribution can be accessed using stochastic interpolants. The key idea is to incorporate the observation $y^{k+1}$ into the generative dynamics defined in Equation~\eqref{eq:prior_forward_drift}. This leads to the following posterior forward equation
    \begin{equation}\label{eq:posterior_forward_drift}
        dx_{t} = b_{t}^{F}(x_{t},x^{k}, y^{k+1})\mathrm{d}t + \sqrt{2 \varepsilon(t)}\mathrm{d}w_{t}.
    \end{equation} 
    The forward drift $b_{t}^{F}(x_{t},x^{k}, y^{k+1})$ is defined as the sum of a conditional velocity and a posterior score term
    \begin{equation} \label{eq:forward_drift_observation}
        b_{t}(x_{t}, x^{k}, y^{k+1}) + \varepsilon(t) s^{x,y}_{t}(x_{t}, x^{k}, y^{k+1}),
    \end{equation}
    where $s^{x,y}_{t}(x_{t}, x^{k}, y^{k+1})$ is the score of the density of $x_{t}$ given $x^{k}$ and $y^{k+1}$. The conditional velocity is not known a priori but, once again, can be derived directly from the velocity
    \begin{equation} \label{eq: posterior_drift}
        b_{t}(x_{t}, x^{k}, y^{k+1}) = b_{t}(x_{t}, x^{k}) + \lambda_{t} s^{y}_{t}(x_{t}, x^{k}, y^{k+1}),
    \end{equation}
    where $s^{y}_{t}(x_{t}, x^{k}, y^{k+1})$ is the score of the likelihood and $\lambda_{t}$ is a time-dependent coefficient given by
    \begin{equation}
        \lambda_{t} = \frac{\gamma_{t}(\dot{\beta}_{t}\gamma_{t} - \beta_{t}\dot{\gamma}_{t})}{\beta_{t}}.
    \end{equation}
    Thanks to Bayes' rule, the posterior score in Equation~\eqref{eq:forward_drift_observation} can be decomposed as
    \begin{equation} \label{eq:score_bayes}
        s^{x,y}_{t}(x_{t}, x^{k}, y^{k+1}) = s^{x}_{t}(x_{t}, x^{k}) + s^{y}_{t}(x_{t}, x^{k}, y^{k+1}).
    \end{equation}
    Since the prior score is already available from the velocity through Equation~\eqref{eq:prior_score}, the only unknown quantity that remains to be computed is the likelihood score $s^{y}_{t}(x_{t}, x^{k}, y^{k+1})$. To do so, we can use MMPS, the method introduced in Section \ref{subsection: Optimal_proposal}. 
    
    Putting all these elements together, the forward drift conditioned on an observation can be fully computed from the learned interpolant without additional training. The resulting procedure is summarized in Algorithm~\ref{algo:b_f}.

    \begin{algorithm}[h!]
        \caption{Computation of $b^{F}_{t}(x_{t}, x^{k}, y^{k+1})$}
        \begin{algorithmic}
            \STATE {\bfseries Inputs:} $t$, ~$x_{t}$, ~$x^{k}$, ~$b_{\theta}$, ~$y^{k+1}$, ~$\varepsilon(\cdot)$
    
                \vspace{0.2cm}
            
                \STATE \quad $b_{x} \gets b_{\theta}(t, x_{t}, x^{k})$
    
                \vspace{0.2cm}
                
                \STATE \quad $\displaystyle s_{x} \gets \frac{(\alpha_{t} \dot{\beta}_{t} - \dot{\alpha}_{t}\beta_{t})x^{k} - \dot{\beta}_{t}x_{t} + \beta_{t}b_{x}}{\gamma_{t} (\dot{\beta}_{t} \gamma_{t} - \beta_{t} \dot{\gamma}_{t})}$~(Eq.~\ref{eq:prior_score})
    
                \vspace{0.2cm}
            
                \STATE \quad $s_{y} \gets \mathrm{MMPS}(t, x_{t}, x^{k}, y^{k+1})$~(Eq.~\ref{eq:score_likelihood})
    
                \vspace{0.2cm}
                
                \STATE \quad  $\displaystyle b_{x,y} \gets b_{x} + \frac{\gamma_{t}(\dot{\beta}_{t} \gamma_{t} - \beta_{t} \dot{\gamma}_{t})}{\beta_{t}} s_{y}$~(Eq.~\ref{eq: posterior_drift})
    
                \vspace{0.2cm}
                
                \STATE \quad $s_{x,y} \gets s_{x} + s_{y}$~(Eq.~\ref{eq:score_bayes})
    
                \vspace{0.2cm}
                
                \STATE \quad $b^{F} = b_{x,y} + \varepsilon(t) s_{x,y}$~(Eq.~\ref{eq:forward_drift_observation})
    
                \vspace{0.2cm}
                
            \STATE {\bfseries Return} $b^{F}$ 
        \end{algorithmic}
        \label{algo:b_f}
    \end{algorithm}
    
\subsection{Computing weights with stochastic interpolants}
    The other important ingredient required to apply the FA-APF is the computation of particle weights (lines 5–10 of Algorithm \ref{algo:FA-APF}). To do so, we can use the same approximation as in Section \ref{subsection: weights}. It requires calculating $\mathbb{E}[x^{k+1} \mid x^{k}_{i}]$, which can be done directly using the learned velocity 
    \begin{equation} \label{eq:next_state_expectation}
         \mathbb{E}[x^{k+1} \mid x^{k}_{i}] = \frac{b_{0}(x^{k}_{i}, x^{k}_{i}) - \dot{\alpha}_{0}x^{k}_{i} }{\dot{\beta}_{0}},
    \end{equation}
    under the assumption that $\dot{\beta}_{0} \neq 0$.

\newpage
\section{Additional details on Algorithm \ref{algo:FA-APF}} \label{appendix:details_algo}
\subsection{Computational complexity}
Using the notation of Algorithm \ref{algo:FA-APF}, let $K$ denote the number of filtering steps, $N$ the number of particles, $T$ the number of diffusion steps required to solve Eq.~\eqref{eq:reverse_diffusion_equation}, and $C_{\text{step}}$ the cost of a single diffusion step. Since the computational cost is dominated by posterior sampling (line 13 of Algorithm \ref{algo:FA-APF}), the overall complexity of the algorithm is $\mathcal{O}\left( K \times N \times T \times C_{\text{step}} \right)$. However, because posterior sampling is fully parallelizable across particles, the effective complexity can be reduced by a factor $N$, and we therefore omit the explicit loop over particles in Algorithm \ref{algo:FA-APF} for readability.

As explained in Section $\ref{section:Method}$, each diffusion step is based on MMPS \cite{MMPS} and therefore involves solving a linear system (see Eq.~\ref{eq:score_likelihood}). In practice, this system is solved iteratively using GMRES \cite{GMRES}, which involves vector–Jacobian products and leads to a cost of approximately $C_{\text{step}} = \mathcal{O}(M \times C_{\text{denoiser}})$, where $M$ is the number of GMRES iterations and $C_{\text{denoiser}}$ the cost of a pass through the denoiser. Thus, if posterior sampling can be parallelized across particles, the computational cost of the proposed method is $\mathcal{O}(K \times T \times M \times C_{\text{denoiser}})$. Importantly, FA-APF is compatible with any posterior sampling methods, including computationally cheaper alternatives such as DPS \cite{DPS}, although we adopt MMPS here due to its superior empirical performance. 

Regarding memory complexity, the main bottleneck also comes from solving the linear system at each diffusion step when sampling from the optimal proposal. Indeed, since the covariance matrix $\mathbb{V}[x^{k+1} \mid x_{t}^{k+1}, x^{k}]$ is too large to be stored explicitly in high-dimensional systems, we instead solve a linear system using implicit access to $\mathbb{V}[x^{k+1} \mid x_{t}^{k+1}, x^{k}]$ through the second-order Tweedie's formula (see Eq.~\ref{eq:Tweedie_2nd_ordre}). This requires the Jacobian of the denoiser, which is obtained via automatic differentiation at the cost of approximately storing the denoiser activations. For large models such as GenCast (Section \ref{subsection: GenCast}), this can be particularly demanding in terms of VRAM. However, we were able to run the 1° resolution denoiser on H100 GPUs with 80 Gb of VRAM without relying on memory-saving techniques such as gradient checkpointing.

\subsection{Approximations}
Several approximations are introduced in Algorithm \ref{algo:FA-APF} to make FA-APF tractable in practice. First, the posterior sampling method (MMPS, \citeauthor{MMPS}) used to sample from the optimal proposal is not exact. Indeed, the local diffusion distribution $p(x^{k+1} \mid x^{k+1}_{t}, x^{k})$ is approximated by a Gaussian distribution whose moments are estimated using Tweedie's formulas (see Appendix \ref{appendix:Tweedie}). While this approximation becomes accurate at the end of the reverse diffusion process (i.e. for low noise levels), it is less accurate at earlier stages where $p(x^{k+1} \mid x^{k+1}_{t}, x^{k})$ is typically multimodal and far from Gaussian. An interesting direction for future work would be to rigorously evaluate different posterior sampling methods \cite{DPS, MMPS, EnKG} on weather data using coverage and accuracy metrics \cite{TARP, MIRA}.

As explained in Section \ref{section:Method}, particle weights are also inexact, as they are obtained by approximating $p(x^{k+1} \mid x^{k})$ with a Dirac centered at $\mathbb{E}\left[ x^{k+1} \mid x^{k} \right]$. In preliminary experiments on Lorenz63, we found that approximate and exact weights (computed using Monte Carlo) lead to similar results when the number of particles is sufficiently large. However, in high-dimensional settings, the number of particles is typically small compared to the dimension of the system, making improved weight approximations an important avenue for future research.

Finally, although FA-APF uses the optimal proposal to minimize the variance of particle weights, inflation of the observation covariance matrix remains necessary in high-dimensional systems to prevent weight collapse. Although this biases the approximation of the filtering distribution by modifying the likelihood, it leads to strong empirical results while preserving particle diversity.


\end{document}